\relax
\documentclass[letterpaper]{article} 
\usepackage{aaai19}  
\usepackage{times}  
\usepackage{helvet}  
\usepackage{courier}  
\usepackage{graphicx}  
\usepackage{url}            
\usepackage{booktabs}       
\usepackage{amsfonts}       
\usepackage{nicefrac}       
\usepackage{microtype}      

\usepackage{subfigure}
\usepackage{algorithm}
\usepackage{algorithmic}
\usepackage{amsmath}
\usepackage{amsfonts}
\usepackage{amssymb}
\usepackage{amsthm}
\newtheorem{theorem}{Theorem}
\newtheorem{corollary}{Corollary}
\newtheorem{lemma}{Lemma}

\newtheorem{proposition}{Proposition}
\newtheorem{definition}{Definition}

\newenvironment{customthm}[1]
  {\innercustomthm}
  {\endinnercustomthm}

\newenvironment{customcor}[1]
  {\innercustomcor}
  {\endinnercustomcor}

\newenvironment{customprop}[1]
  {\innercustomprop}
  {\endinnercustomprop}

\frenchspacing  
\setlength{\pdfpagewidth}{8.5in}  
\setlength{\pdfpageheight}{11in}  
  \pdfinfo{
/Title (Machine Teaching for Inverse Reinforcement Learning:
Algorithms and Applications)
/Author (Daniel S. Brown and Scott Niekum)}
\setcounter{secnumdepth}{2}  
 \begin{document}
%
\title{Machine Teaching for Inverse Reinforcement Learning: \\
Algorithms and Applications}
\author{ Daniel S. Brown \and   Scott Niekum \\
 Department of Computer Science \\
  University of Texas at Austin \\
  \texttt{\{dsbrown,sniekum\}@cs.utexas.edu} \\
}
\maketitle
\begin{abstract}
Inverse reinforcement learning (IRL) infers a reward function from demonstrations, allowing for policy improvement and generalization. However, despite much recent interest in IRL, little work has been done to understand the minimum set of demonstrations needed to teach a specific sequential decision-making task. We formalize the problem of finding maximally informative demonstrations for IRL as a machine teaching problem where the goal is to find the minimum number of demonstrations needed to specify the reward equivalence class of the demonstrator. We extend previous work on algorithmic teaching for sequential decision-making tasks by showing a reduction to the set cover problem which enables an efficient approximation algorithm for determining the set of maximally-informative demonstrations. We apply our proposed machine teaching algorithm to two novel applications: providing a lower bound on the number of queries needed to learn a policy using active IRL and developing a novel IRL algorithm that can learn more efficiently from informative demonstrations than a standard IRL approach.
\end{abstract}

\section{Introduction}
As robots and digital personal assistants become more prevalent, there is growing interest in developing algorithms that allow everyday users to program or adapt these intelligent systems to accomplish sequential decision-making tasks, such as performing household chores, or carrying on a meaningful conversation. A common way to teach sequential decision-making tasks is through Learning from Demonstration (LfD) \cite{Argall2009}, in which the goal is to learn a policy from demonstrations of desired behavior.  More specifically, Inverse Reinforcement Learning (IRL) \cite{ng2000algorithms,arora2018survey} is a form of LfD that aims to infer the reward function that motivated the demonstrator's behavior, allowing for reinforcement learning \cite{sutton1998reinforcement} and generalization to unseen states.
Despite much interest in IRL, there is not a clear, agreed-upon definition of optimality in IRL, namely, the size of the minimal set of demonstrations needed to teach a sequential decision-making task. 

There are many compelling reasons to study optimal teaching for IRL: (1)  it gives insights into the intrinsic difficulty of teaching certain sequential decision-making tasks; (2) it provides a lower bound on the number of samples needed by active IRL algorithms \cite{lopes2009active,brown2018risk}; (3) optimal teaching can be used to design algorithms that better leverage highly informative demonstrations which do not follow the i.i.d. assumption made by many IRL algorithms; (4) studying optimal teaching can help humans better teach robots through demonstration \cite{cakmak2012algorithmic} and help robots better communicate their intentions \cite{huang2017enabling}; and (5) optimal teaching can give insight into how to design \cite{mei2015using} and defend against \cite{alfeld2017explicit} demonstration poisoning attacks in order to design IRL algorithms that are robust to poor or malicious demonstrations.

We formulate the problem of optimal teaching for sequential decision making tasks using the recently popularized \textit{machine teaching} framework \cite{zhu2015machine}. 
The machine teaching problem is the inverse of the machine learning problem. In machine teaching, the goal is to select the optimal training set that minimizes teaching cost, often defined as the size of the training data set, and the loss or teaching risk between the model learned by the student and the learning target. While machine teaching has been applied to regression and classification \cite{zhu2015machine,liu2016teaching}, little work has addressed machine teaching for sequential decision-making tasks such as learning from demonstration via IRL.

The contributions of this paper are fourfold: (1) a formal definition of machine teaching for IRL, (2) an efficient algorithm to compute optimal teaching demonstrations for IRL, (3) an application of machine teaching to find the lower bound on the number of queries needed to learn a task using active IRL, and (4) a novel Bayesian IRL algorithm that learns more efficiently from informative demonstrations than a standard IRL approach by leveraging the non-i.i.d. nature of highly informative demonstrations from a teacher.

\section{Related work}

Determining the minimum number of demonstrations needed to teach a task falls under the fields of Algorithmic Teaching \cite{goldman1995complexity,balbach2009recent} and Machine Teaching \cite{zhu2015machine,zhu2018overview}. However, almost all previous work has been limited to optimal teaching for classification and regression tasks. The work of Singla et al. \shortcite{singla2014near} bears a strong resemblance to our work: they use submodularity to find an efficient approximation algorithm for an optimal teaching problem that has a set-cover reduction; however, their approach is designed for binary classification rather than sequential decision making.

Cakmak and Lopes \shortcite{cakmak2012algorithmic} examined the problem of giving maximally informative demonstrations to teach a sequential decision-making task; however, as we discuss in Section~\ref{sec:Cakmak}, their algorithm often underestimates the minimum number of demonstrations needed to teach a task. Other related approaches examine how a robot can give informative demonstrations to a human \cite{huang2017enabling}, or formalize optimal teaching as a cooperative two-player Markov game \cite{hadfield2016cooperative}; however, neither approach addresses the machine teaching problem of finding the minimum number of demonstrations needed to teach a task. 

Our proposed machine teaching algorithm leverages the notion of behavioral equivalence classes over reward functions to achieve an efficient approximation algorithm. Zhang et al. \shortcite{zhang2009policy} also use behavioral equivalence classes over reward functions as part of their solution to a policy teaching problem, in which the goal is to induce a desired policy by modifying the intrinsic reward of an agent through incentives. Rathnasabapathy et al. \shortcite{rathnasabapathy2006exact} and Zeng et al. \shortcite{zeng2012exploiting} use equivalence classes over agent behaviors when solving the problem of interacting with multiple agents in a POMDP.

There is a large body of work on using active learning for IRL \cite{lopes2009active,cohn2011comparing,cuiactive2017,sadigh2017active,brown2018risk}.
Our goal of finding a minimal set of demonstrations to teach an IRL agent is related to one of the goals of active learning: reducing the number of examples needed to learn a concept \cite{settles2012active}. In active learning, the agent requests labeled examples to search for the correct hypothesis. Optimal teaching is usually more sample efficient than active learning since the teacher gets to pick maximally informative examples to teach the target concept to the learner \cite{zhu2018overview}. Thus, a solution to the machine teaching problem for IRL provides a method for finding the lower bound on the number of queries needed to learn a policy when using active IRL.  

In the field of Cognitive Science, researchers have investigated Bayesian models of informative human teaching and the inferences human students make when they know they are being taught \cite{shafto2008teaching}.  Ho et al. \cite{ho2016showing} showed that humans give different demonstrations when performing a sequential decision making task, depending on whether they are teaching or simply doing the task. While studies have shown that standard IRL algorithms can benefit from informative demonstrations \cite{cakmak2012algorithmic,ho2016showing}, to the best of our knowledge, no IRL algorithms exist that can explicitly leverage the informative nature of such demonstrations.
 In Section~\ref{subsec:BIO-IRL} we propose a novel IRL algorithm that can learn more efficiently from informative demonstrations than a standard Bayesian IRL approach that assumes demonstrations are drawn i.i.d. from the demonstrator's policy. Research in computational learning theory has shown a dramatic reduction in the number of teaching examples needed to teach anticipatory learners who know they are being taught by a teacher \cite{doliwa2014recursive,gao2017preference}, but has not addressed sequential decision making tasks. To the best of our knowledge, our work is the first to demonstrate the advantages of an anticipatory IRL algorithm which can leverage the non-i.i.d. nature of highly informative demonstrations from a teacher.

\section{Problem formalism}
\subsection{Markov decision processes}
We model the environment as a Markov decision process (MDP), $\langle \mathcal{S}, \mathcal{A}, T, R, \gamma, S_0 \rangle$, where $\mathcal{S}$ is the set of states, $\mathcal{A}$ is the set of actions, $T:\mathcal{S} \times \mathcal{A} \times \mathcal{S} \to [0,1]$ is the transition function, $R: \mathcal{S} \to \mathbb{R}$ is the reward function, $\gamma \in [0,1)$ is the discount factor, and $S_0$ is the initial state distribution. 
A policy $\pi : \mathcal{S} \times \mathcal{A} \mapsto [0,1]$ is a mapping from states to a probability distribution over actions. 
We assume that a stochastic optimal policy gives equal probability to all optimal actions. 
The value of executing policy $\pi$ starting at state $s \in S$ is defined as 
\begin{equation}
V^\pi(s) = \mathbb{E}[\sum_{t=0}^\infty \gamma^t R(s_t) \mid \pi, s_0 = s].
\end{equation}
The Q-value of a state-action pair $(s,a)$ is defined as 
\begin{equation}
Q^{\pi}(s,a) = R(s) + \gamma \mathbb{E}_{s' \sim T(\cdot\mid s,a)}[ V^\pi(s')]
\end{equation}
and we denote the optimal Q-value function as $Q^*(s,a) = \max_{\pi} Q^{\pi}(s,a)$.

As is common in the literature \cite{ziebart2008maximum,sadigh2016information,pirotta2016inverse,barreto2017successor}, we assume that the reward function can be expressed as a linear combination of features, $\phi: \mathcal{S} \mapsto \mathbb{R}^k$, so that $R(s) = \mathbf{w}^T \phi(s)$ where $\mathbf{w} \in \mathbb{R}^k$ is the vector of feature weights. This assumption is not restrictive as these features can be nonlinear functions of the state variables. We can write the expected discounted return of a policy as  
\begin{eqnarray}
\rho(\pi) \,= \,\mathbb{E}[\sum_{t=0}^{\infty} \gamma^t \mathbf{w}^T \phi(s_t) \mid \pi]
\,=\, \mathbf{w}^T \mu_{\pi},
\end{eqnarray}
where 
$\mu_\pi = \mathbb{E}[\sum_{t=0}^{\infty}\gamma^t \phi(s_t) | \pi]$.

\subsection{Machine teaching}
The machine teaching problem \cite{zhu2015machine} is to select the optimal training set $D^*$ that minimizes the teaching cost, often defined as the size of the data set, and the teaching risk which represents the teacher's dissatisfaction with the model learned by the student. We focus on the constrained form of machine teaching \cite{zhu2018overview} defined as 
\begin{eqnarray}
\min_D& &\text{TeachingCost}(D) \\
s.t.& &\text{TeachingRisk}(\hat{\theta}) \leq \epsilon \\
& &\hat{\theta} = \text{MachineLearning}(D)
\end{eqnarray}
where $D$ is the training set to be optimized, $\hat{\theta}$ is the model the student learns under $D$, and $\epsilon \geq 0$ determines how much the model learned by the student can differ from the learning target of the teacher.

\subsection{Problem definition} \label{subsec:problemdef}
We now formulate the optimal teaching problem for IRL as a machine teaching problem. 
We assume that the expert teacher operates under a ground-truth reward, $R^*$, and is able to demonstrate state-action pairs $(s,a)$ by executing the corresponding optimal policy $\pi^*$. A naive formulation of the machine teaching problem for IRL would be to find the minimal set of demonstrations, $\mathcal{D}$, that enables an IRL agent to learn $R^*$ within some $\epsilon$ teaching risk.
However, IRL is ill-posed \cite{ng2000algorithms}---there are an infinite number of reward functions that explain any optimal policy. Instead, we focus on determining the minimal set of demonstrations that enable a learner to find a reward function that results in an optimal policy with performance similar to the performance of the teacher's policy under $R^*$. Specifically, we define the policy loss of an estimated weight vector $\mathbf{\hat{w}}$ compared with the true weight vector $\mathbf{w^*}$ as
\begin{equation}\label{eq:loss}
\text{Loss}(\mathbf{w^*}, \mathbf{\hat{w}}) = \mathbf{w^*}^T ( \mu_{\pi^*} - \mu_{\hat{\pi}}),
\end{equation}
where $\pi^*$ is the optimal policy under $\mathbf{w}^*$ and $\hat{\pi}$ is the optimal policy under $\mathbf{\hat{w}}$. Equation~(\ref{eq:loss}) gives the difference in expected return between the teacher's policy $\pi^*$ and the expected return of the learner's policy, when both are evaluated under the teacher's reward function $R^* = \mathbf{w^*}^T \phi(s)$. We can now formalize the machine teaching problem for IRL. 

\paragraph{Machine teaching problem for IRL:} Given an MDP, $\mathcal{M}$, and the teacher's reward function, $R^* = \mathbf{w^*}^T \phi(s)$,  find the set of demonstrations, $\mathcal{D}$, that minimizes the following optimization problem:
\begin{eqnarray}
\min_\mathcal{D}& &\text{TeachingCost}(\mathcal{D}) \\
s.t.& &\text{Loss}(\mathbf{w^*}, \mathbf{\hat{w}}) \leq \epsilon \\
& &\mathbf{\hat{w}} = \text{IRL}(\mathcal{D})
\end{eqnarray}
where $\mathcal{D}$ is the set of demonstrations, 
and $\mathbf{\hat{w}}$ is the reward recovered by the learner using Inverse Reinforcement Learning (IRL). 
This formalism covers both exact teaching ($\epsilon = 0$) and approximate teaching ($\epsilon > 0$).
In this work we define 
\begin{equation}
\text{TeachingCost}(\mathcal{D}) = |\mathcal{D}| 
\end{equation}
where, $|\mathcal{D}|$ can denote either the number of $(s,a)$ pairs in $\mathcal{D}$ or the number of trajectories in $\mathcal{D}$; however, our proposed approach can be easily extended to problems with different teaching costs, e.g., where some demonstrations may be more expensive or dangerous for the teacher. 

\subsection{Discussion}
Like most machine teaching problems \cite{zhu2018overview}, the machine teaching problem for IRL is a difficult optimization problem. A brute-force approach would require searching over the power set of all possible demonstrations. This search is intractable due to the size of the power set and the need to solve an IRL problem for each candidate set of demonstrations. 
One of our contributions is an efficient algorithm for solving the machine teaching problem for IRL that only requires solving a single policy evaluation problem to find the expected feature counts of $\pi^*$ and then running a greedy set-cover approximation algorithm.

Before discussing our proposed approach in detail, we first introduce the notion of a behavioral equivalence class which is a key component of our approximation algorithm. We will also provide an overview and analysis of the work of \citeauthor{cakmak2012algorithmic} \shortcite{cakmak2012algorithmic} which provides the baseline and motivation for our approach.

\section{Behavioral Equivalence Classes} \label{sec:FeasibleRegion}

The \textit{behavioral equivalence class} (BEC) of a policy $\pi$ is defined as the set of reward functions under which $\pi$ is optimal:
\begin{equation}\label{eq:bec}
\text{BEC}(\pi) =  
\{\mathbf{w} \in \mathbb{R}^k \mid \pi \text{ optimal w.r.t. } R(s) = \mathbf{w}^T \phi(s) \}. 
\end{equation}
In this section we briefly discuss how to calculate the behavioral equivalence class for both a policy and for a set of demonstrations from a policy.
Given an MDP with either finite or continuous states and with a reward function represented as a linear combination of features, Ng and Russell \shortcite{ng2000algorithms} derived the behavioral equivalence class (BEC) for a policy. We summarize their result as follows:


\begin{theorem} \label{thm:contFeasible} \cite{ng2000algorithms}
Given an MDP, BEC($\pi$) is given by the following intersection of half-spaces:
\begin{eqnarray}
&&\mathbf{w}^T (\mu_{\pi}^{(s,a)} - \mu_{\pi}^{(s,b)}) \geq 0,\\
&&\forall a \in \arg\max_{a'\in \mathcal{A}} Q^*(s,a'), b \in \mathcal{A}, s \in \mathcal{S},
\end{eqnarray}
where $\mathbf{w} \in \mathbb{R}^k$ are the reward function weights and 
\begin{equation}\mu_{\pi}^{(s,a)} = \mathbb{E}[\sum_{t=0}^{\infty} \gamma^t \phi(s_t) | \pi, s_0 = s, a_0 = a],
\end{equation}
 is the vector of expected feature counts that result from taking action $a$ in state $s$ and following $\pi$ thereafter. 
\end{theorem}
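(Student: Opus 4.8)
The plan is to rewrite each half-space inequality in terms of $Q$-values and then show that the resulting system is exactly the Bellman optimality condition for $\pi$. The bridge is the identity $Q^\pi(s,a) = \mathbf{w}^T \mu_\pi^{(s,a)}$: since $R(s) = \mathbf{w}^T\phi(s)$ is linear in the features and expectation is linear,
\[
Q^\pi(s,a) \,=\, \mathbb{E}[\textstyle\sum_{t=0}^\infty \gamma^t R(s_t) \mid \pi, s_0=s, a_0=a] \,=\, \mathbf{w}^T\,\mathbb{E}[\textstyle\sum_{t=0}^\infty \gamma^t \phi(s_t) \mid \pi, s_0=s, a_0=a] \,=\, \mathbf{w}^T \mu_\pi^{(s,a)},
\]
so $\mathbf{w}^T(\mu_\pi^{(s,a)} - \mu_\pi^{(s,b)}) \geq 0$ is equivalent to $Q^\pi(s,a) \geq Q^\pi(s,b)$. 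I would then prove the two set inclusions separately, working with the support of $\pi(\cdot\mid s)$ and invoking the paper's assumption that a stochastic optimal policy puts positive probability on every optimal action, which makes $\mathrm{supp}(\pi(\cdot\mid s))$ equal to $\arg\max_{a'} Q^*(s,a')$ for every $\mathbf{w}\in\text{BEC}(\pi)$.

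\emph{From $\mathbf{w}\in\text{BEC}(\pi)$ to the inequalities.} If $\pi$ is optimal under $R = \mathbf{w}^T\phi$ then $V^\pi = V^*$, hence $Q^\pi(s,a) = R(s) + \gamma\,\mathbb{E}_{s'\sim T(\cdot\mid s,a)}[V^*(s')] = Q^*(s,a)$ for all $(s,a)$, and $\pi$ is greedy, so $\pi(a\mid s)>0$ forces $a\in\arg\max_{a'}Q^*(s,a')$. Thus for each such $a$ and every $b\in\mathcal{A}$, $Q^\pi(s,a) = \max_{a'}Q^*(s,a') \geq Q^\pi(s,b)$, which is exactly the claimed half-space constraint.

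\emph{From the inequalities to $\mathbf{w}\in\text{BEC}(\pi)$.} Assume $\mathbf{w}^T(\mu_\pi^{(s,a)}-\mu_\pi^{(s,b)})\geq 0$ for all $a\in\arg\max_{a'}Q^*(s,a')$ (equivalently, all $a$ in the support of $\pi(\cdot\mid s)$), all $b\in\mathcal{A}$, and all $s$. Applying the inequality in both directions to two supported actions $a,a'$ gives $Q^\pi(s,a) = Q^\pi(s,a')$, so the convex combination $V^\pi(s) = \sum_a \pi(a\mid s)Q^\pi(s,a)$ equals the common value $Q^\pi(s,a)$ at any supported $a$. The constraints then yield $V^\pi(s) \geq Q^\pi(s,b) = R(s) + \gamma\,\mathbb{E}_{s'\sim T(\cdot\mid s,b)}[V^\pi(s')]$ for every $b$, while $V^\pi(s) = Q^\pi(s,a) \leq \max_b Q^\pi(s,b)$ gives the reverse inequality; hence $V^\pi(s) = \max_b\,(R(s) + \gamma\,\mathbb{E}_{s'\sim T(\cdot\mid s,b)}[V^\pi(s')])$ for all $s$, i.e., $V^\pi$ is a fixed point of the Bellman optimality operator. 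Since $\gamma<1$ this operator is a contraction with unique fixed point $V^*$, so $V^\pi = V^*$ and $\pi$ is optimal under $R$; that is, $\mathbf{w}\in\text{BEC}(\pi)$.

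I expect the second inclusion to be the crux: promoting the local, per-state inequalities to global optimality of $\pi$ is not a one-step comparison but requires recognizing $V^\pi$ as a Bellman-optimality fixed point and then invoking the $\gamma<1$ contraction/uniqueness property. A secondary subtlety I would address is the continuous-state case, where the quantifier ``$\forall s\in\mathcal{S}$'' must be read up to sets of measure zero, and one should remark that $\mu_\pi^{(s,a)}$ and all the expectations above are well defined and finite because $\gamma<1$ and $\phi$ is bounded, so the intersection-of-half-spaces description still makes sense there.
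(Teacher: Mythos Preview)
Your proof is correct and shares the paper's core idea: rewrite the half-space constraints as $Q$-value inequalities via the identity $Q^\pi(s,a)=\mathbf{w}^T\mu_\pi^{(s,a)}$, then identify these with the optimality condition for $\pi$. The paper's own proof does exactly this for the forward inclusion (if $\pi$ is optimal then $Q^*(s,a)\geq Q^*(s,b)$, hence the half-space inequalities) and then simply declares the result, without separately arguing the reverse inclusion.

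Your treatment of the reverse inclusion is therefore more complete than the paper's. Where the paper essentially takes the equivalence ``by definition,'' you explicitly promote the per-state inequalities $Q^\pi(s,a)\geq Q^\pi(s,b)$ to global optimality by showing $V^\pi$ satisfies the Bellman optimality equation and invoking the $\gamma<1$ contraction to conclude $V^\pi=V^*$. This is the honest way to close the argument, and you are right to flag it as the crux. You also correctly handle the interpretation of the quantifier $a\in\arg\max_{a'}Q^*(s,a')$ in the reverse direction by identifying it with $\mathrm{supp}(\pi(\cdot\mid s))$ via the paper's stochastic-optimal-policy assumption; the paper's proof leaves this implicit. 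In short: same approach, but your version actually proves both inclusions.
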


We can similarly define the BEC for a set of demonstrations $\mathcal{D}$ from a policy $\pi$:

\begin{corollary}\label{cor:feasibleDemo}
BEC$(\mathcal{D} | \pi)$ is given by the following intersection of half-spaces:
\begin{equation}
\mathbf{w}^T (\mu_{\pi}^{(s,a)} - \mu_{\pi}^{(s,b)}) \geq 0,\; \forall (s,a) \in \mathcal{D}, b \in \mathcal{A}.  
\end{equation}
\end{corollary}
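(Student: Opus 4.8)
The plan is to obtain Corollary~\ref{cor:feasibleDemo} as a direct restriction of Theorem~\ref{thm:contFeasible} to the state-action pairs that actually appear in $\mathcal{D}$. First I would pin down the object under study: BEC$(\mathcal{D} \mid \pi)$ is the set of weight vectors $\mathbf{w} \in \mathbb{R}^k$ for which every demonstrated pair $(s,a) \in \mathcal{D}$ is consistent with an optimal policy, i.e.\ for which the demonstrated action $a$ is (weakly) optimal in state $s$ under the reward $R(s) = \mathbf{w}^T \phi(s)$, the continuation feature counts being those of $\pi$. Using the identity $Q^\pi(s,a) = \mathbf{w}^T \mu_\pi^{(s,a)}$ that underlies Theorem~\ref{thm:contFeasible} (which follows directly from the definitions of $Q^\pi$ and $\mu_\pi^{(s,a)}$ since the reward depends only on the state), the pair $(s,a)$ is consistent exactly when $Q^\pi(s,a) \geq Q^\pi(s,b)$ for all $b \in \mathcal{A}$, i.e.\ when $\mathbf{w}^T(\mu_\pi^{(s,a)} - \mu_\pi^{(s,b)}) \geq 0$ for all $b \in \mathcal{A}$.

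Next I would recall the standard Bellman-optimality / policy-improvement condition: a stationary policy is optimal iff, at every state, each action in its support weakly dominates all other actions in $Q^\pi$-value. This is precisely the content that Theorem~\ref{thm:contFeasible} converts into the half-space system by ranging over all $s \in \mathcal{S}$, all $a \in \arg\max_{a'\in\mathcal{A}} Q^*(s,a')$, and all $b \in \mathcal{A}$. For a set of demonstrations we simply do not see constraints at the non-demonstrated pairs, so the system collapses to the inequalities indexed by $(s,a) \in \mathcal{D}$ and $b \in \mathcal{A}$, which is the claimed characterization.

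The two inclusions then fall out. For ``$\subseteq$'': if $\mathbf{w} \in$ BEC$(\mathcal{D}\mid\pi)$, each demonstrated action is optimal in its state, so the corresponding $Q^\pi$-value inequalities hold, which are exactly the stated half-space constraints. For ``$\supseteq$'': if $\mathbf{w}$ satisfies all the listed constraints, then for each $(s,a) \in \mathcal{D}$ the demonstrated action attains the maximal $Q^\pi$-value in state $s$, hence the demonstrations are reproduced by an optimal policy under $\mathbf{w}$; here I would invoke the paper's assumption that a stochastic optimal policy assigns equal probability to all optimal actions, so that observing $(s,a) \in \mathcal{D}$ is informative only about $a$ being \emph{among} the optimal actions and imposes no constraint forcing non-demonstrated actions to be strictly suboptimal.

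I do not anticipate a real obstacle: the result is essentially a bookkeeping restriction of Theorem~\ref{thm:contFeasible}. The only point requiring care is the modeling choice embedded in the definition of BEC$(\mathcal{D}\mid\pi)$ — namely that a partial demonstration set constrains the reward only through the demonstrated pairs, and that ``consistency'' means the demonstrated actions are weakly (not uniquely) optimal — so I would state that definition explicitly before appealing to Theorem~\ref{thm:contFeasible}.
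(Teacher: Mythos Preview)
Your proposal is correct and matches the paper's own approach: the paper's proof simply states that the result follows from the proof of Theorem~\ref{thm:contFeasible} by restricting to the half-spaces corresponding to the optimal $(s,a)$ pairs that appear in $\mathcal{D}$. Your write-up is more explicit about the definition of BEC$(\mathcal{D}\mid\pi)$ and the two inclusions, but the underlying argument is identical.
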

All proofs can be found in the appendix.

\paragraph{Example:}
Consider the grid world shown in Figure~\ref{subfig:simpleNg}, with four actions available in each state and deterministic transitions. We computed the BEC using a featurized reward function $R(s) = \mathbf{w}^T \phi(s)$, where $\mathbf{w} = (w_0, w_1)$ with $w_0$ indicating the reward weight for a ``white" cell and $w_1$ indicating the reward weight for the ``grey" cell (see appendix for full details).
\begin{figure}[t]
\centering
\subfigure[Policy]{\includegraphics[scale=0.09]{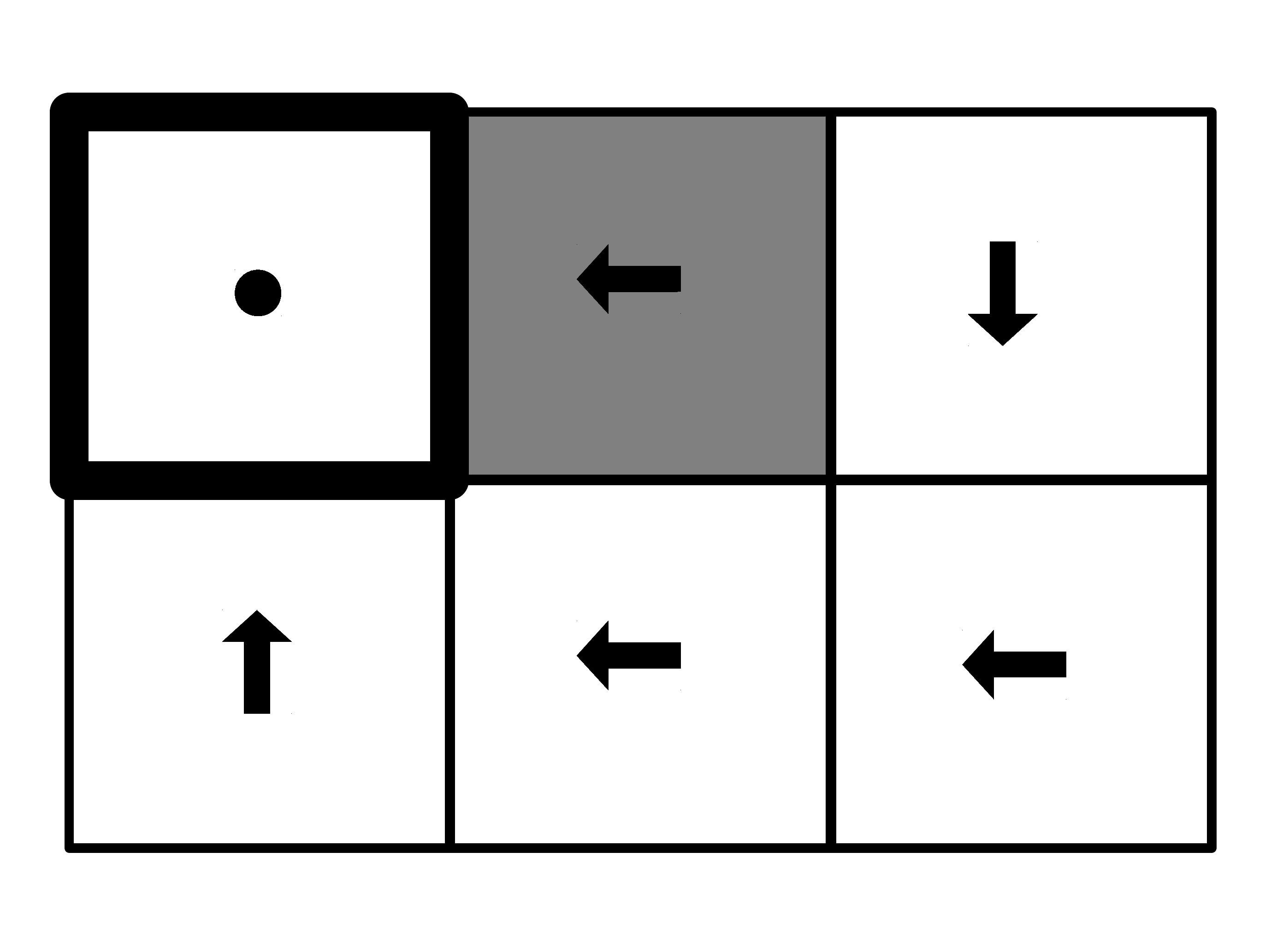}
\label{subfig:simpleNg}
}
\subfigure[BEC(Policy)]{
\includegraphics[scale=0.18]{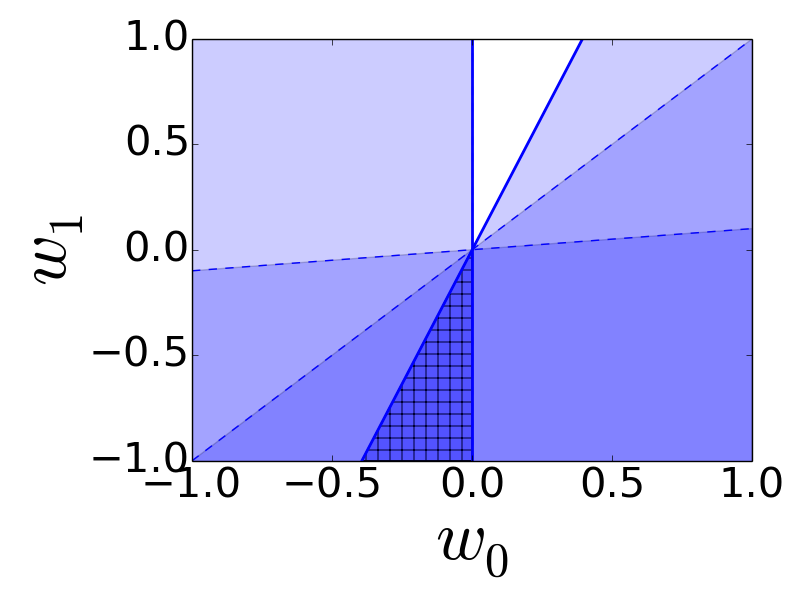}
\label{subfig:simplePolcicyFeasible}
}
\subfigure[Demonstration]{\includegraphics[scale=0.09]{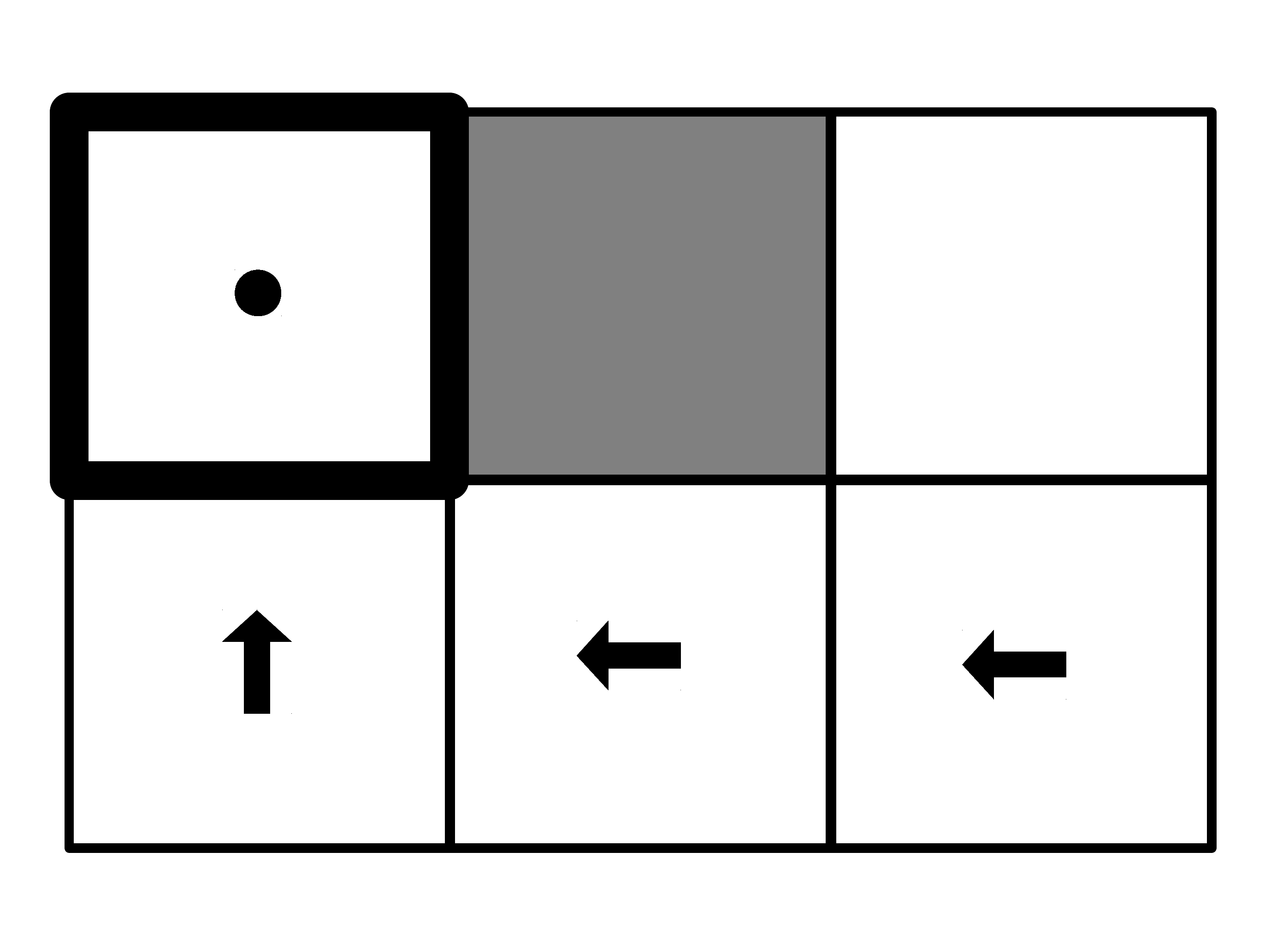}
\label{subfig:simpleCakmak}
}
\subfigure[BEC(Demonstration)]{
\includegraphics[scale=0.18]
{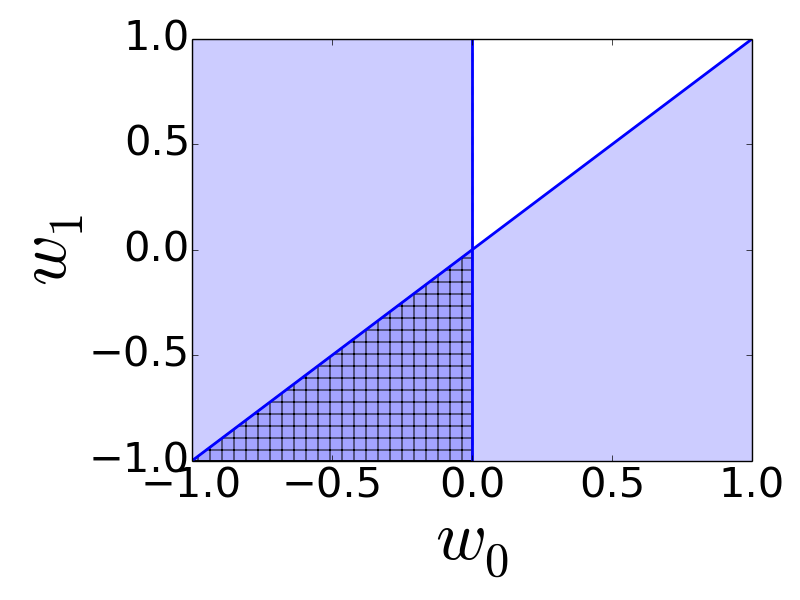}
\label{subfig:simpleDemoFeasible}
}
\caption{Behavioral equivalence classes (BEC) for a grid world with 6 states. The upper left state is a terminal state. Each state has 4 available actions and the reward function is a linear combination of two binary features that indicate whether the color of the cell is white or grey. (a) The optimal policy. (b) The resulting intersection of half-spaces (shaded region) that defines all weight vectors such that the policy shown in (a) is optimal. (c) A demonstration from the optimal policy in (a) that is not maximally informative. (d)  The intersection of half-spaces (shaded region) resulting from the demonstration shown in (c).}
\label{fig:exampleFeasiblePolicy}
\end{figure}
The resulting half-space constraints are shown in Figure~\ref{subfig:simplePolcicyFeasible}. The intersection of these half-spaces exactly describes the set of rewards that make the policy shown in Figure~\ref{subfig:simpleNg} optimal: both white and grey cells have negative reward and the weight for the grey feature is low enough that the optimal policy avoids the shaded cell when starting from the top right cell.

Figures~\ref{subfig:simpleCakmak} and \ref{subfig:simpleDemoFeasible} show the BEC of a demonstration. 
The demonstration shows that both feature weights are non-positive and that $w_1$ is no better than $w_0$ (otherwise the demonstration would have gone through the grey cell); however, the demonstration leaves open the possibility that all feature weights are equal. However, if the demonstration had started in the top right cell, the BEC of the demonstration would be identical to the BEC of the optimal policy. This highlights the fact that some demonstrations from an optimal policy are more informative than others. An efficient algorithm for finding maximally informative demonstrations using the BEC of the teachers policy is one of the contributions of this paper.



\section{Uncertainty Volume Minimization} \label{sec:Cakmak}
We now give an overview of the algorithmic teaching approach proposed by Cakmak and Lopes \shortcite{cakmak2012algorithmic} which motivates our work. 
The main insight that Cakmak and Lopes use is that of Corollary~\ref{cor:feasibleDemo}: if an optimal demonstration contains $(s,a)$, then an IRL algorithm can infer that 
\begin{eqnarray}
&&Q^*(s,a) \geq Q^*(s,b), \forall b \in \mathcal{A} \\
\Leftrightarrow &&\mathbf{w}^T (\mu^{(s,a)}_{\pi^*} - \mu^{(s,b)}_{\pi^*}) \geq 0, \forall b \in \mathcal{A}.
\end{eqnarray}


Given a candidate demonstration set $\mathcal{D}$, Cakmak and Lopes use the intersection of the corresponding half-spaces (as defined in Corollary~\ref{cor:feasibleDemo}) as a representation of the learner's uncertainty over the true reward function. They use a Monte Carlo estimate of the volume of this cone as a measure of the learner's uncertainty, and seek demonstrations that minimize
 the uncertainty, $G(\mathcal{D})$, over the true reward function, where
\begin{equation}
G(\mathcal{D}) = \frac{1}{N} \sum_{j=1}^N \delta(x_j \in C(\mathcal{D})),
\end{equation}
$\delta$ is an indicator function, $C(\mathcal{D})$ is the intersection of half-spaces given in Corollary~\ref{cor:feasibleDemo}, and the volume is estimated by drawing $N$ random points $x_j$ from $[-1,1]^k$.
The set of maximally informative demonstrations is chosen greedily by iteratively selecting the trajectory that maximally decreases $G(\mathcal{D})$. This process repeats until $G(\mathcal{D})$ falls below a user defined threshold $\epsilon$. We refer to this algorithm as the Uncertainty Volume Minimization (UVM) algorithm.


In the UVM algorithm, trajectories are added until $G(\mathcal{D})$ is below some user-provided threshold $\epsilon$; however, this does not solve the machine teaching problem for IRL presented in Section~\ref{subsec:problemdef}. This is because it only ensures that the estimated uncertainty volume $G(\mathcal{D})$ is less than $\epsilon$, not that the policy loss (Eq.~(\ref{eq:loss})) is less than $\epsilon$. In order to guarantee that the policy loss is below a desireable level, this threshold must be carefully tuned for every MDP. If $\epsilon$ is too low, then the algorithm will never terminate.
Alternatively, if $\epsilon$ is too high, then not enough demonstrations will be selected to teach an appropriate reward function and the policy loss may be large, depending on the reward function selected by the IRL algorithm. 
In our experiments, we remove the need for parameter tuning by stopping the UVM algorithm if it cannot find a new demonstration to add that decreases $G(\mathcal{D})$.

Another limitation of the UVM algorithm is that of volume estimation: exact volume estimation is $\#\mathcal{P}$-hard \cite{valiant1979complexity,simonovits2003compute} and straightforward Monte Carlo estimation is known to fail in high-dimensions \cite{simonovits2003compute}. 
Additionally, if there are two (or more) actions that are both optimal in state $s$, and those two actions ($a$ and $b$) are demonstrated, this will result in the following constraints:
\begin{eqnarray}
&w^T(\mu^{(s,a)}_\pi - \mu^{(s,b)}_\pi) \geq 0 \text{\; and \;}
w^T(\mu^{(s,b)}_\pi - \mu^{(s,a)}_\pi) \geq 0& \nonumber \\
&\Rightarrow w^T(\mu^{(s,b)}_\pi - \mu^{(s,a)}_\pi) = 0&
\end{eqnarray}
This is problematic because any strict subspace of $\mathbb{R}^k$ has measure zero, resulting in an uncertainty volume of zero. Thus, the UVM algorithm will terminate with zero uncertainty if two optimal actions are ever demonstrated from the same state, even if this leaves an entire $(k-1)$ dimensional subspace of uncertainty over the reward function. Note that when $\pi^*$ is a stochastic optimal policy this behavior will occur once any trajectory that contains a state with more than one optimal action is chosen---the best trajectory to select next will always be one that visits a previously demonstrated state and chooses an alternative optimal action, resulting in zero uncertainty volume. 

One possible solution would be to use more sophisticated sampling techniques such as hit-and-run sampling \cite{smith1984efficient}. However, these methods still assume there are points on the interior of the sampling region and can be computationally intensive, as they require running long Markov chains to ensure good mixing.
Another possible remedy is to only use deterministic policies for teaching.
We instead propose a novel approach based on a set cover equivalence which removes the need to estimate volumes and works for both deterministic and stochastic teacher policies.

\section{Set Cover Machine Teaching for IRL}\label{sec:MachineTeachingAlgo}
Our proposed algorithm seeks to remedy the problems with the UVM algorithm identified in the previous section in order to find an efficient approximation to the machine teaching problem proposed in Section~\ref{subsec:problemdef}.

Our first insight is the following:
\begin{proposition} \label{prop:1}
Consider an optimal policy $\pi^*$ for reward $R^*(s) = \mathbf{w^*}^T \phi(s)$. Given any weight vector $\mathbf{w} \in$ BEC($\pi^*$), if $R(s) = \mathbf{w}^T\phi(s)$ is not constant for all states in $\mathcal{S}$, then Loss($\mathbf{w^*}$, $\mathbf{w}$) = 0.
\end{proposition}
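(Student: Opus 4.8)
The plan is to recast the claim as a statement about policies: the policy $\hat\pi$ that is optimal under $\mathbf{w}$ attains the optimal return under $\mathbf{w^*}$. Note first that $\text{Loss}(\mathbf{w^*},\mathbf{w}) = \mathbf{w^*}^T(\mu_{\pi^*} - \mu_{\hat\pi}) \ge 0$ automatically, since $\pi^*$ is by definition the optimal policy under $\mathbf{w^*}$ and hence maximizes $\mathbf{w^*}^T\mu_\pi$ over all policies. So it suffices to prove the reverse inequality $\mathbf{w^*}^T\mu_{\hat\pi} \ge \mathbf{w^*}^T\mu_{\pi^*}$, i.e. that $\hat\pi$ is itself optimal under $\mathbf{w^*}$, i.e. $\mathbf{w^*}\in\text{BEC}(\hat\pi)$.

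The key fact to exploit is that $\mathbf{w}\in\text{BEC}(\pi^*)$ means $\pi^*$ is optimal for the reward $R_{\mathbf{w}}(s)=\mathbf{w}^T\phi(s)$, so $V^{\pi^*}_{\mathbf{w}} = V^{*}_{\mathbf{w}}$ and therefore $\mathbf{w}^T\mu^{(s,a)}_{\pi^*} = Q^*_{\mathbf{w}}(s,a)$ for every $s$ and $a$. Substituting this into the half-space description of $\text{BEC}(\pi^*)$ from Theorem~\ref{thm:contFeasible}, the defining inequalities of $\text{BEC}(\pi^*)$ say precisely that, with respect to $R_{\mathbf{w}}$, every action $\pi^*$ takes is (weakly) optimal. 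I would then argue that the non-constancy of $R_{\mathbf{w}}$ forces the defining inequality to be \emph{strict} for every action not chosen by $\pi^*$ at every state it visits --- intuitively, a reward that collapses all of these constraints to equalities everywhere must be constant --- so that the $R_{\mathbf{w}}$-optimal action at each such state is exactly the one $\pi^*$ picks. This pins $\hat\pi$ down to agree with $\pi^*$ on the states that matter, giving $\mathbf{w^*}^T\mu_{\hat\pi} = \mathbf{w^*}^T\mu_{\pi^*}$ and hence $\text{Loss}=0$.

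The main obstacle is precisely that last step: ruling out the degenerate possibility that $R_{\mathbf{w}}$ is non-constant yet some state reachable under $\hat\pi$ admits a tie among $R_{\mathbf{w}}$-optimal actions that $\mathbf{w^*}$ would break "the wrong way." The clean cases --- $\mathbf{w}$ in the relative interior of $\text{BEC}(\pi^*)$, or $\pi^*$ the unique $\mathbf{w^*}$-optimal policy --- fall out immediately from the strict-inequality argument above, since then $\hat\pi=\pi^*$; the subtlety is entirely on the boundary of the cone, and that is where the non-constancy hypothesis has to do real work. I expect the argument there to hinge on convexity of $\text{BEC}(\pi^*)$ together with a careful accounting of which half-space constraints are tight at $\mathbf{w}$ versus at $\mathbf{w^*}$. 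The reduction in the first paragraph and the identity $\mathbf{w}^T\mu^{(s,a)}_{\pi^*}=Q^*_{\mathbf{w}}(s,a)$ are routine.
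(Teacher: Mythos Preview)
The paper does not give a formal proof of this proposition; it simply says the claim ``follows directly from'' the definition of $\text{BEC}(\pi^*)$ in Equation~(\ref{eq:bec}). The intended argument is the one-liner: $\mathbf{w}\in\text{BEC}(\pi^*)$ means, by definition, that $\pi^*$ is optimal for $R_{\mathbf{w}}$; hence policy optimization under $\mathbf{w}$ may return $\pi^*$ itself, and then $\mu_{\hat\pi}=\mu_{\pi^*}$ so $\text{Loss}=0$. The non-constancy hypothesis is invoked only to rule out the degenerate reward under which \emph{every} policy is optimal.

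Your route is more careful than the paper's. You correctly set up the reduction $\text{Loss}\ge 0$ plus ``show $\hat\pi$ is $\mathbf{w^*}$-optimal,'' and you correctly observe that $\mathbf{w}^T\mu^{(s,a)}_{\pi^*}=Q^*_{\mathbf{w}}(s,a)$. Where you diverge from the paper is in treating $\hat\pi$ as the canonical stochastic policy that randomizes uniformly over $R_{\mathbf{w}}$-optimal actions, and then trying to force $\hat\pi=\pi^*$ via strict inequalities. That is a genuinely harder statement, and your instinct that the boundary case is the obstacle is right: non-constancy of $R_{\mathbf{w}}$ by itself does \emph{not} guarantee that every non-$\pi^*$ action is strictly suboptimal under $\mathbf{w}$. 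A two-feature, two-action example with $\mathbf{w^*}=(1,0)$ and $\mathbf{w}=(1,1)$ on the boundary of $\text{BEC}(\pi^*)$ already produces a non-constant $R_{\mathbf{w}}$ with ties that $\mathbf{w^*}$ breaks one way and the uniform $\hat\pi$ breaks the other, so the strict-inequality step you sketch cannot go through in general.

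In short: the paper reads ``$\hat\pi$ is the optimal policy under $\mathbf{w}$'' as permitting the choice $\hat\pi=\pi^*$, which makes the proposition immediate from the BEC definition; your attempt to prove the stronger statement that \emph{any} $R_{\mathbf{w}}$-optimal $\hat\pi$ has zero loss runs into a real gap at the boundary of the cone that the non-constancy hypothesis alone does not close.
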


This proposition says that if we have a non-degenerate weight vector in the behavioral equivalence class for a policy $\pi^*$, then we incur zero policy loss by using $\mathbf{w}$ rather than $\mathbf{w^*}$ for performing policy optimization. This follows directly from Equation~(\ref{eq:bec}). Thus, to ensure that the policy loss constraint, Loss$(\mathbf{w^*}, \mathbf{\hat{w}})\leq \epsilon$, holds in the machine teaching problem, we can focus on finding a demonstration set $\mathcal{D}$ such that the weight vector, $\mathbf{\hat{w}}$, learned through IRL is in BEC($\pi^*$). 

Note that Proposition~\ref{prop:1} also assumes that the IRL agent being taught will not find a degenerate reward function if a non-degenerate solution exists. This property is true of all standard IRL methods \cite{gao2012survey,arora2018survey}. While it is possible that an IRL algorithm may return a constant reward function (e.g., $R(s) = 0, \forall s \in \mathcal{S}$) if $\mathcal{D} = \emptyset$, the only way for $\mathcal{D} = \emptyset$ to be the optimal solution for machine teaching is if the resulting loss is less than $\epsilon$, i.e.,
\begin{equation}
\text{Loss}(\mathbf{w^*}, \mathbf{\hat{w}}) = \mathbf{w^*}^T ( \mu_{\pi^*} - \mu_{\hat{\pi}})\leq \epsilon.
\end{equation}
For $\epsilon=0$, this will be false since a constant reward function will almost surely lead to an optimal policy $\hat{\pi}$ which does not match the feature counts of the teacher's policy, $\pi^*$. 

Our second insight is based on the fact that the behavioral equivalence class for $\pi^*$ is an intersection of half-spaces (Theorem~\ref{thm:contFeasible}). Rather than give demonstrations until the uncertainty volume, $G(\mathcal{D})$, is less than some arbitrary value, demonstrations should be chosen specifically to define BEC($\pi^*$). Thus, to obtain a feasible solution to the machine teaching problem for IRL we need to select a demonstration set such that the corresponding intersection of half-spaces, BEC$(\mathcal{D}|\pi^*)$ is equal to BEC$(\pi^*)$.

Our final insight is to formulate an efficient approximation algorithm for the machine teaching problem for IRL through a reduction to the set cover problem. This allows us to avoid the difficult volume estimation problem required by the UVM algorithm and focus instead on a well known discrete optimization problem. From Section~\ref{sec:FeasibleRegion} we know that the behavioral equivalence class of both a policy and a demonstration are both characterized by intersections of half-spaces, and each demonstration from $\pi^*$ produces an intersection of half-spaces which contains BEC$(\pi^*)$. Thus, the machine teaching problem for IRL (Section~\ref{subsec:problemdef}) with $\epsilon = 0$ is an instance of the set cover problem: we have a set of half-spaces defining BEC$(\pi^*)$, each possible trajectory from $\pi^*$ covers zero or more of the half-spaces that define BEC$(\pi^*)$, and we wish to find the smallest set of demonstrations, $\mathcal{D}$, such that $\text{BEC}(\mathcal{D} | \pi^*) = \text{BEC}(\pi^*)$. 

One potential issue is that, as seen in Figure~\ref{fig:exampleFeasiblePolicy}, many half-space constraints will be non-binding and we are only interested in covering the non-redundant half-space constraints that minimally define BEC$(\pi^*)$. To address this, we use linear programming to efficiently remove redundant half-spaces constraints \cite{paulraj2010comparative} before running our set cover algorithm (see appendix for details).

Note that this approach allows us to solve the machine teaching IRL problem without needing to repeatedly solve RL or IRL problems. The only full RL problem that needs to be solved is to obtain $\pi^*$ from $\mathbf{w^*}$. After $\pi^*$ is obtained, we can efficiently solve for the feature expectations $\mu_{\pi^*}^{(s,a)}$ by solving the following equation 
\begin{equation}
\mu_{\pi^*}^{(s,a)} = \phi(s) + \gamma \mathbb{E}_{s'|a}[\mu_{\pi^*}^{(s')}] 
\end{equation}
where 
$\mu_{\pi^*}^{(s)} = \phi(s) + \gamma \mathbb{E}_{s'|\pi^*(s)} [\mu_{\pi^*}^{(s')}]$.
These equations satisfy a Bellman equation and can be solved for efficiently. The values $\mu_{\pi^*}^{(s,a)}$ and $\mu_{\pi^*}^{(s)}$ are often called successor features \cite{dayan1993improving} in reinforcement learning and recent work has shown that they can be efficiently computed for model-free problems with continuous state-spaces \cite{barreto2017successor}.

We summarize our approach as follows: Given $\pi^*$, the optimal policy under the teachers reward function $\mathbf{w^*}$, (1) Solve for the successor features $\mu_{\pi^*}^{(s,a)}$,  (2) Find the half-space constraints for BEC($\pi^*$) using Theorem~\ref{thm:contFeasible}, (3) Find the minimal representation of BEC($\pi^*$) using linear programming, (4) Generate candidate demonstrations under $\pi^*$ from each starting state and calculate their corresponding half-space unit normal vectors using Corollary~\ref{cor:feasibleDemo}, and (5) Greedily cover all half-spaces in BEC($\pi^*$) by sequentially picking the candidate demonstration that covers the most uncovered half-spaces. 

We call this algorithm Set Cover Optimal Teaching (SCOT) and give pseudo-code in Algorithm~\ref{alg:scot}. In the pseudo-code we use $\mathbf{\hat{N}}[\cdot]$ to denote the set of unit normal vectors for a given intersection of half-spaces and $\setminus$ to denote set subtraction. To generate candidate demonstration trajectories we perform $m$ rollouts of $\pi^*$ for each start state. If $\pi^*$ is deterministic, then $m=1$ is sufficient.

Whereas UVM finds demonstrations that successively slice off volume from the uncertainty region,  SCOT directly estimates the minimal set of demonstrations that exactly constrain BEC$(\pi^*)$. This removes both the need to calculate high-dimensional volumes and the need to determine an appropriate stopping threshold. 
SCOT also has the following desirable properties:

\begin{proposition}
The Set Cover Optimal Teaching (SCOT) algorithm always terminates.
\end{proposition}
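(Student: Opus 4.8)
The plan is to observe that every stage of SCOT except the greedy covering loop (step 5) is a manifestly finite computation, so that termination reduces to bounding the number of iterations of that loop. First I would record that steps 1--4 terminate: step 1 solves the linear Bellman-type system for the successor features $\mu_{\pi^*}^{(s,a)}$; step 2 enumerates the half-space constraints of Theorem~\ref{thm:contFeasible}, a finite set because $\mathcal{S}$ and $\mathcal{A}$ are finite; step 3 solves one linear program per constraint to discard redundant half-spaces; and step 4 performs $m$ rollouts of $\pi^*$ from each of the finitely many start states, each rollout being finite (truncated at an effective horizon). Hence the only thing at issue is the covering loop.

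Next I would cast step 5 as a set-cover instance. Let $H$ be the set of unit normal vectors of the non-redundant half-spaces output by step 3; $|H|<\infty$ by the above. Let $\mathcal{C}$ be the finite collection of candidate demonstrations from step 4, and for $d\in\mathcal{C}$ let $\hat{N}[\mathrm{BEC}(d\mid\pi^*)]\subseteq H$ denote the half-spaces of $H$ that $d$ covers. The loop repeatedly deletes from an ``uncovered'' set $U\subseteq H$ the elements covered by a greedily chosen $d\in\mathcal{C}$, halting when $U=\emptyset$. So it suffices to establish (i) every non-halting iteration strictly shrinks $U$, and (ii) $U$ actually reaches $\emptyset$.

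The key step --- and the one I expect to require the most care --- is showing that the candidate demonstrations jointly cover $H$, i.e.\ $H\subseteq\bigcup_{d\in\mathcal{C}}\hat{N}[\mathrm{BEC}(d\mid\pi^*)]$. By Corollary~\ref{cor:feasibleDemo}, a demonstration containing the pair $(s,a)$ contributes exactly the constraints $\mathbf{w}^T(\mu_{\pi^*}^{(s,a)}-\mu_{\pi^*}^{(s,b)})\ge 0$ for all $b\in\mathcal{A}$, which are precisely the constraints Theorem~\ref{thm:contFeasible} uses to define $\mathrm{BEC}(\pi^*)$ at state $s$ with optimal action $a$. Since step 4 generates trajectories of $\pi^*$ from every start state --- and, when $\pi^*$ is stochastic, uses enough rollouts that every optimal action at every reachable state is eventually taken --- every defining constraint of $\mathrm{BEC}(\pi^*)$, hence every element of its non-redundant subset $H$, appears in $\hat{N}[\mathrm{BEC}(d\mid\pi^*)]$ for some $d\in\mathcal{C}$. (If one prefers to sidestep the stochastic subtlety, it suffices to note that a natural implementation also halts whenever the greedy choice covers no new half-space, which again bounds the loop by finiteness of $H$.) Granting coverage, (i) holds because a nonempty $U$ must meet some $\hat{N}[\mathrm{BEC}(d\mid\pi^*)]$, so the greedily chosen demonstration --- which covers the maximum number of uncovered half-spaces, hence at least one --- deletes at least one element of $U$; and (ii) follows since $|U|$ is a nonnegative integer decreasing strictly at each non-halting iteration, so the loop halts in at most $|H|$ steps. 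Combining this with the termination of steps 1--4 yields that SCOT always terminates. $\qed$
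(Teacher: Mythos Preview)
Your proof is correct and follows essentially the same approach as the paper: both argue that $\mathrm{BEC}(\pi^*)$ consists of finitely many half-spaces which the candidate demonstrations jointly cover, so the greedy loop strictly shrinks the uncovered set and terminates in at most $|H|$ iterations. Your treatment is a bit more careful (making the strict-decrease argument explicit and flagging the stochastic-rollout subtlety), while the paper additionally remarks on the continuous-state case, where $\mathrm{BEC}(\pi^*)$ is itself approximated from the same sampled rollouts and hence is covered by construction.
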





\begin{theorem}
Under the assumption of error-free demonstrations, SCOT is a $(1-1/e)$-approximation to the Machine Teaching Problem for IRL (Section~\ref{subsec:problemdef}) for the following learning algorithms: Bayesian IRL \cite{ramachandran2007bayesian,choi2011map},
Policy Matching \cite{neu2007apprenticeship},
and Maximum Likelihood IRL \cite{babes2011apprenticeship,lopes2009active}.
\end{theorem}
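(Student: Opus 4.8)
The plan is to show that for each of the three named IRL algorithms, the output weight vector $\mathbf{\hat{w}}$ lands in $\text{BEC}(\pi^*)$ whenever the demonstration set $\mathcal{D}$ satisfies $\text{BEC}(\mathcal{D}\mid\pi^*) = \text{BEC}(\pi^*)$, and then combine this with Proposition~\ref{prop:1} and the set-cover reduction to get the approximation guarantee. First I would establish the approximation-ratio half: SCOT runs the classical greedy algorithm on an explicit set cover instance whose universe is the (de-duplicated) set of non-redundant half-space normals defining $\text{BEC}(\pi^*)$ and whose sets are the half-spaces covered by each candidate demonstration trajectory (Corollary~\ref{cor:feasibleDemo}); since every demonstration from $\pi^*$ yields half-spaces that are implied by $\text{BEC}(\pi^*)$, a feasible cover corresponds exactly to a $\mathcal{D}$ with $\text{BEC}(\mathcal{D}\mid\pi^*)=\text{BEC}(\pi^*)$, and the greedy set-cover guarantee gives the $(1-1/e)$ factor on the number of demonstrations (equivalently, a $\ln$-factor, but stated here in the submodular-coverage form). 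I would note that candidate generation from every start state, with enough rollouts, ensures the constructed instance is feasible and that no cheaper $\mathcal{D}$ can do better, so the optimum of the set cover instance equals the optimum of the $\epsilon=0$ machine teaching problem.

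Next I would verify the \emph{correctness} half, i.e.\ that satisfying the set cover actually discharges the policy-loss constraint for each learner. The common structure is that all three algorithms, given error-free demonstrations, only ever output a weight vector consistent with the observed optimality constraints $\mathbf{w}^T(\mu^{(s,a)}_{\pi^*} - \mu^{(s,b)}_{\pi^*}) \ge 0$ for $(s,a)\in\mathcal{D}$, $b\in\mathcal{A}$, i.e.\ $\mathbf{\hat{w}}\in\text{BEC}(\mathcal{D}\mid\pi^*)$. For Bayesian IRL, the softmax likelihood $P((s,a)\mid\mathbf{w})\propto e^{\beta Q^*_{\mathbf{w}}(s,a)}$ in the limit $\beta\to\infty$ (or the MAP estimate under the usual analysis of \citeauthor{choi2011map}) concentrates all posterior mass on weights making every demonstrated action optimal; for Maximum Likelihood IRL the likelihood is maximized (value $1$ in the deterministic-demonstration, infinite-$\beta$ regime) exactly on $\text{BEC}(\mathcal{D}\mid\pi^*)$; and Policy Matching drives the matching loss to its minimum, again attained on $\text{BEC}(\mathcal{D}\mid\pi^*)$. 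Since $\text{BEC}(\mathcal{D}\mid\pi^*)=\text{BEC}(\pi^*)$ by construction, we get $\mathbf{\hat{w}}\in\text{BEC}(\pi^*)$, and because $\mathbf{\hat{w}}$ is non-degenerate (the IRL algorithm prefers a non-constant reward when one exists, as discussed after Proposition~\ref{prop:1}), Proposition~\ref{prop:1} gives $\text{Loss}(\mathbf{w^*},\mathbf{\hat{w}})=0\le\epsilon$. Thus the cover produced by SCOT is feasible for the machine teaching problem, and its size is within $(1-1/e)$ of optimal.

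The main obstacle I anticipate is pinning down, uniformly and rigorously, the claim that ``error-free demonstrations $\Rightarrow$ $\mathbf{\hat{w}}\in\text{BEC}(\mathcal{D}\mid\pi^*)$'' for each algorithm, because each has a temperature/confidence parameter and a prior, and the raw MAP or MLE estimator need not lie on the relative interior of the cone (it could sit on a lower-dimensional face, or the likelihood could be flat). I would handle this by stating the precise regime under which the reduction is claimed — demonstrations assumed optimal and the learner's confidence taken in the appropriate limit, with a full-dimensional prior so that ties are broken toward non-degenerate rewards — and by citing the structural characterizations in \citeauthor{ramachandran2007bayesian}, \citeauthor{choi2011map}, \citeauthor{neu2007apprenticeship}, and \citeauthor{babes2011apprenticeship}. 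A secondary technical point to be careful about is the redundant-constraint removal step: I must argue that covering only the non-redundant half-spaces of the minimal representation of $\text{BEC}(\pi^*)$ suffices to force equality of the two cones, which follows because the discarded constraints are implied by the retained ones, so any $\mathbf{\hat{w}}$ satisfying the retained constraints satisfies all of them. The remaining bookkeeping — that greedy on this instance terminates (already a separate proposition) and that the generated candidates cover the universe — is routine.
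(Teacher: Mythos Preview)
Your proposal is correct and follows essentially the same route as the paper: factor the theorem into (i) the greedy set-cover/submodularity guarantee for covering the non-redundant half-spaces of $\text{BEC}(\pi^*)$, and (ii) the claim that each of the three IRL algorithms, in the error-free ($\alpha\to\infty$) regime, only returns weights satisfying the demonstrated half-space constraints, so $\mathbf{\hat w}\in\text{BEC}(\mathcal{D}\mid\pi^*)=\text{BEC}(\pi^*)$ and Proposition~\ref{prop:1} yields zero loss. The only noteworthy difference is how Policy Matching is handled: the paper invokes an equivalence result of Melo et al.\ showing that Policy Matching and Bayesian IRL share the same reward solution space (so the BIRL lemma transfers directly), whereas you argue from the matching loss itself; either works, but the citation route is shorter.
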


\begin{algorithm}[t] 
\caption{Set Cover Optimal Teaching (SCOT)} 
\label{alg:scot} 
\begin{algorithmic}[1] 
    \REQUIRE MDP $\mathcal{M}$ with set of possible initial states $S_0$ and reward function $R^*(s) = \mathbf{w^*}^T\phi(s)$.
    \STATE  \textit{// Compute the behavioral equivalence class of $\pi^*$}
    \STATE Compute optimal policy $\pi^*$ for $\mathcal{M}$ and feature expectations $\mu_{\pi^*}^{(s,a)}$.
    \STATE Use Theorem 1 to compute BEC$(\pi^*)$.
    \STATE $U \gets \mathbf{\hat{N}}[\text{BEC}(\pi^*)]$.
    \STATE Remove redundant half-space constraints from $U$.
    \STATE  \textit{// Compute candidate demonstration trajectories}
    \STATE $\mathcal{T} = \emptyset$
    \FORALL{$s_0 \in S_0$}
    	\FOR{$i = 1,\ldots,m$}
    		\STATE Generate trajectory $\tau = (s_0, a_0, \ldots, s_{H-1}, a_{H-1})$ by starting at $s_0$ and following $\pi^*$ for $H$ steps.
    		\STATE $\mathcal{T} = \mathcal{T} \cup \tau$
    		\STATE Use Corollary~\ref{cor:feasibleDemo} to calculate BEC$(\tau | \pi^*)$
    	\ENDFOR
    \ENDFOR
    \STATE \textit{// Solve set cover using greedy approximation}
    \STATE $\mathcal{D} \gets \emptyset$, $C \gets \emptyset$
    \WHILE{$|U \setminus C| \neq 0$}
   		\STATE $\tau_{\rm greedy} = \arg\max_{\tau \in \mathcal{T}} \left| \mathbf{\hat{N}}[\text{BEC}(\tau|\pi^*)] \cap U \setminus C\right|$
   		\STATE $\mathcal{D} = \mathcal{D} \cup \tau_{\rm greedy}$
    	\STATE $C = C \cup \mathbf{\hat{N}}[\text{BEC}(\tau|\pi^*)]$

    \ENDWHILE
    \RETURN $\mathcal{D}$
\end{algorithmic}
\end{algorithm}

\subsection{Algorithm comparison}
To compare the performance of SCOT and UVM, 
we ran an experiment on random 9x9 grid worlds with 8-dimensional binary features per cell. We computed maximally informative demonstration sets with SCOT and UVM using trajectories consisting of single state-action pairs. We measured the performance loss for each algorithm by running IRL to find the maximum likelihood reward function given the demonstrations, and then calculating both the policy loss and the percentage of states where the resulting policy took a suboptimal action under the true reward.  Table~\ref{tab:randomGridComparison} shows that the UVM algorithm underestimates the size of the optimal teaching set of demonstrations, due to the difficulty of estimating volumes as discussed earlier, resulting in high performance loss. We tried sampling more points, but found that this only slightly improved performance loss while significantly increasing run-time. Compared to UVM, SCOT successfully finds demonstrations that lead to the correct policy, with orders of magnitude less computation. 


\begin{table*}[!t]
 \centering
  \begin{tabular}{ccccc}
    \toprule
        & Avg. number of $(s,a)$ pairs  & Avg. policy loss & Avg. \% incorrect actions & Avg. time (s) \\ 
    \midrule
    \midrule
UVM ($10^5$) & 5.150 & 1.539 & 31.420 & 567.961 \\
UVM ($10^6$) & 6.650 & 1.076 & 19.568 & 1620.578 \\
UVM ($10^7$) & 8.450 & 0.555 & 18.642 & 10291.365 \\
SCOT & 17.160 & 0.001 & 0.667 & 0.965 \\
    \bottomrule
  \end{tabular}
   \caption{Comparison of Uncertainty Volume Minimization (UVM) and Set Cover Optimal Teaching (SCOT) averaged across 20 random 9x9 grid worlds with 8-dimensional features. UVM($x$) was run using $x$ Monte Carlo samples. UVM underestimates the number of $(s,a)$ pairs needed to teach $\pi^*$.}
  \label{tab:randomGridComparison}
\end{table*}


To further explore the sensitivity of UVM to the number of features, we ran a test on a fixed size grid world with varying numbers of features. We used a deterministic teaching policy to ameliorate the problems with volume computation discussed in Section~\ref{sec:Cakmak}. We found that SCOT is robust for high-dimensional feature spaces, whereas UVM consistently underestimates the minimum number of demonstrations needed when there are 10 or more features, even when teaching a deterministic policy (see appendix for full details).

\section{Applications of Machine Teaching for IRL}
We now discuss some novel applications of machine teaching for IRL. 
One immediate application of SCOT is that it allows the first rigorous and efficiently computable definition of intrinsic teaching difficulty (teaching dimension) for IRL benchmarks. In the following section we demonstrate how SCOT can be used to benchmark active IRL algorithms by providing a lower bound on sample complexity. Finally, we demonstrate that SCOT can be incorporated into Bayesian IRL to allow more efficient use of informative demonstrations through counter-factual reasoning. 

\subsection{Bounding sample complexity for active IRL}
Our first application is to provide a lower bound on the sample complexity of learning a reward function via active queries \cite{lopes2009active,cuiactive2017,brown2018risk}.
To the best of our knowledge, no one has tried to benchmark existing algorithms against optimal queries, due to the combinatorial explosion of possible queries. Note that SCOT requires knowledge of the optimal policy, so it cannot be used directly as an active learning algorithm. Instead, we use SCOT as a tractable approximation to the optimal sequence of queries for active IRL. SCOT generates a sequence of maximally informative demonstrations via the set cover approximation. Thus, we can treat the sequence of demonstrations found by SCOT as an approximation of the best sequence of active queries to ask an oracle when performing active IRL.

We evaluated three active query strategies from the literature: \textit{Max Entropy}, a strategy proposed by Lopes et al. \cite{lopes2009active} that queries the state with the highest action entropy, \textit{Max Infogain}, a strategy proposed by Cui at al. \cite{cuiactive2017} that selects the trajectory with the largest expected change in the posterior $P(R|D)$, and \textit{Max VaR}, a recently proposed risk-aware active IRL strategy \cite{brown2018risk} that utilizes probabilistic performance bounds for IRL \cite{brown2018efficient} to query for the optimal action at the state where the maximum likelihood action given the current demonstrations has the highest 0.95-Value-at-Risk (95th-percentile policy loss over the posterior) \cite{jorion1997value}. We compare these algorithms against random queries and against the maximally informative sequence of queries found using SCOT.

We ran an experiment on 100 random 10x10 grid worlds with 10-dimensional binary features. 
Figure~\ref{fig:activeBenchmark0-1} shows the performance loss for each active IRL algorithm. Each iteration corresponds to a single state query and a corresponding optimal trajectory from that state. After adding each new trajectory to $\mathcal{D}$, the MAP reward function is found using Bayesian IRL \cite{ramachandran2007bayesian}, and the corresponding optimal policy is compared against the optimal policy under the true reward function. 

\begin{figure}
\centering
\includegraphics[scale=0.36]{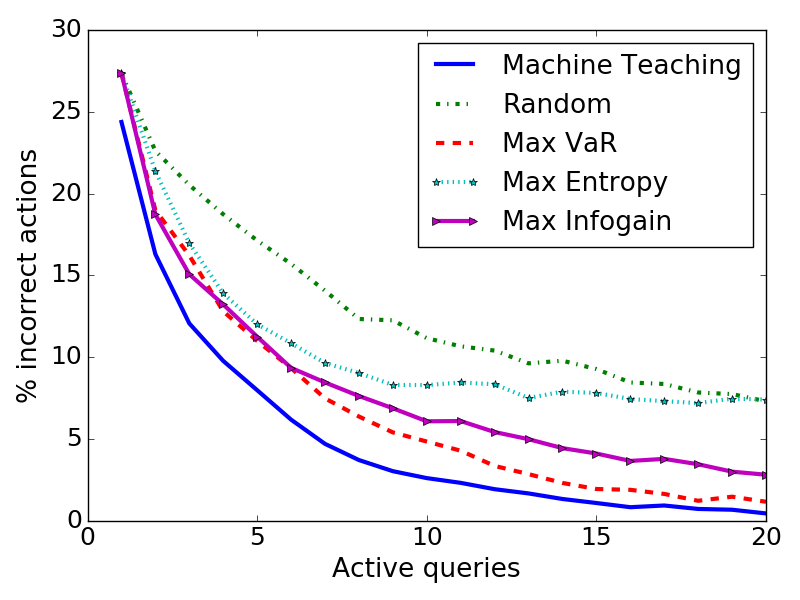}
\caption{Performance of active IRL algorithms compared to an approximately optimal machine teaching benchmark. Results are averaged over 100 random 10x10 grid worlds.}
\label{fig:activeBenchmark0-1}
\end{figure}

\begin{figure*}[t]
\centering
\includegraphics[scale=0.24]{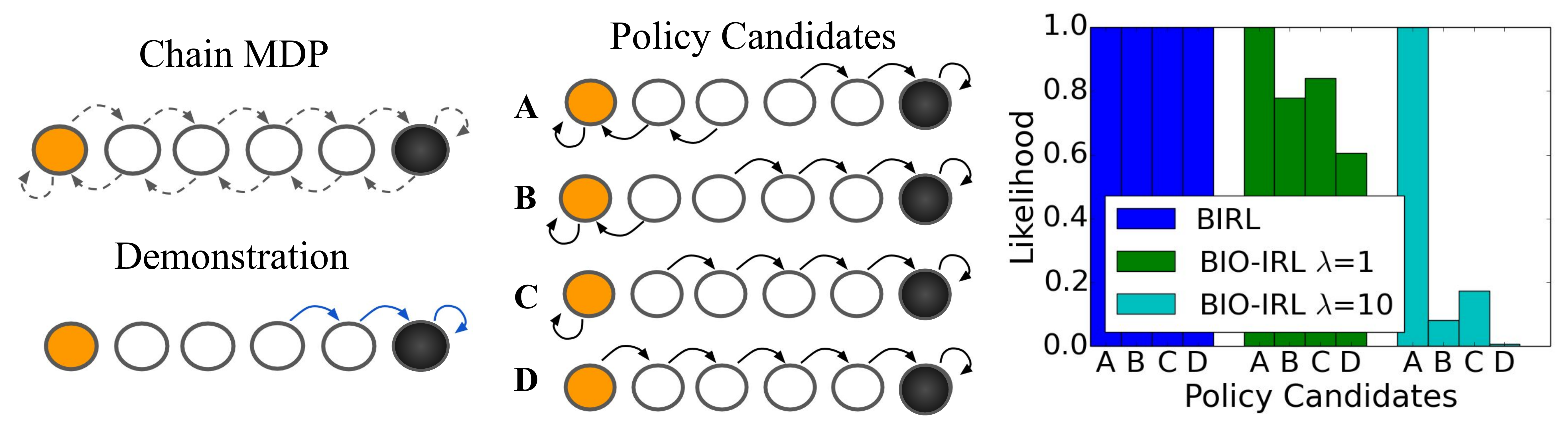}
\caption{Simple Markov chain with three features (orange, white, black) and two actions available in each state. Left: a single demonstrated trajectory. Center: all policies that are consistent with the demonstration. Right: likelihoods for BIRL and BIO-IRL with $\lambda$ = 1 and 10. BIRL gives all rewards that lead to any of the policy candidates a likelihood of 1.0 since it only reasons about the optimality of the demonstration under a hypothesis reward. BIO-IRL reasons about both the optimality of the demonstration and the informativeness of the demonstration and gives highest likelihood to reward functions that induce policy A.}
\label{fig:markovChain}
\end{figure*}

The results in Figure~\ref{fig:activeBenchmark0-1} show that all active IRL approaches perform similarly for early queries, but that Max Entropy ends up performing no better than random as the number of queries increases. This result matches the findings of prior work which showed that active entropy queries perform similarly to random queries for complex domains \cite{lopes2009active}. Max VaR and Max Infogain perform better than Max Entropy for later queries. By benchmarking the against SCOT we see that Max VaR queries are a good approximation of maximally informative queries. 

\subsection{Using optimal teaching to improve IRL}\label{subsec:BIO-IRL}
We next use machine teaching as a novel way to improve IRL when demonstrations are known to be informative. 
Human teachers are known to give highly informative, non i.i.d. demonstrations when teaching \cite{ho2016showing,shafto2008teaching}.  
For example, when giving demonstrations, human teachers do not randomly sample from the optimal policy, potentially giving the same (or highly similar) demonstration twice. However, existing IRL approaches usually assume demonstrations are i.i.d. \cite{ramachandran2007bayesian,ziebart2008maximum,babes2011apprenticeship,fu2017learning}. We propose an algorithm called Bayesian Information-Optimal IRL (BIO-IRL) that adds a notion of demonstrator informativeness to Bayesian IRL (BIRL) \cite{ramachandran2007bayesian}. Our insight is that if a learner knows it is receiving demonstrations from a teacher, then the learner should search for a reward function that makes the demonstrations look both optimal and informative. 


\subsubsection{BIO-IRL algorithm}
Our proposed algorithm leverages the assumption of an expert teacher: demonstrations not only follow $\pi^*$, but are also highly informative. 
We use the following likelihood for BIO-IRL:
\begin{equation}
P(\mathcal{D}|R) \propto P_{\rm info}(\mathcal{D} | R)\cdot \prod_{(s,a) \in \mathcal{D}} P((s,a) | R) 
\end{equation}
where $P((s,a) | R)$ is the standard BIRL softmax likelihood that computes the probability of taking action $a$ in state $s$ under $R$ and $P_{\rm info}(\mathcal{D} | R)$ measures how informative the entire demonstration set $\mathcal{D}$ appears under $R$.

We compute $P_{\rm info}(\mathcal{D} | R)$ as follows. Given a demonstration set $\mathcal{D}$ and a hypothesis reward function $R$, we first compute the information gap, infoGap$(\mathcal{D}, R)$, which uses behavioral equivalence classes to compare the relative informativeness of $\mathcal{D}$ under $R$, with the informativeness of the maximally informative teaching set $\mathcal{D}^*$ under $R$ (see  Algorithm~\ref{alg:infoGap}). We estimate informativeness by computing the angular similarity between half-space normal vectors (see appendix for details). 
BIO-IRL uses the absolute difference between angular similarities to counterfactually reason about the gap in informativeness between the actual demonstrations and an equally sized set of demonstrations designed to teach $R$.
Given the actual demonstration $\mathcal{D}$ and the machine teaching demonstration $\mathcal{D}^*$ under $R$, we let 
\begin{equation}
P_{\rm info}(\mathcal{D} | R) \propto \exp(- \lambda \cdot \text{infoGap}(\mathcal{D}, R))
\end{equation}
where $\lambda\geq0$ is a hyperparameter modeling the confidence that the demonstrations are informative. If $\lambda = 0$, then BIO-IRL is equivalent to standard BIRL. 

The main computational bottlenecks in Algorithm~\ref{alg:infoGap} are finding the optimal policy for $R^*$, and computing the expected feature counts for state-action pairs that are used to calculate the various BEC constraints. 
Computing the optimal policy $\pi^*$ is already required for BIRL. Computing the expected feature counts for the behavioral equivalence classes is equivalent to performing a policy evaluation step which is computationally cheaper than fully solving for $\pi^*$ \cite{barreto2017successor}. By caching BEC$(\pi^*)$ for each reward function it is possible to save significant computation time during MCMC by reusing the cached BEC if a proposal $R$ satisfies the vectorized version of Theorem 1 (see \cite{choi2011map} and Corollary~\ref{thm:ngRussell} in the appendix).


\begin{algorithm}[t] 
\caption{infoGap$(\mathcal{D}, R)$} 
\label{alg:infoGap} 
\begin{algorithmic}[1] 
	\STATE Calculate $\pi^*$ under $R$.
    \STATE Calculate BEC$(\pi^*)$, BEC$(\mathcal{D} | \pi^*)$
    \STATE $\mathcal{D}^* \gets $ \textbf{SCOT}($\pi^*$)
    \STATE $m \gets$ number of trajectories in $\mathcal{D}$
    \STATE $\mathcal{D}^*_{1:m} \gets$ first $m$ trajectories in $\mathcal{D}^*$
    \STATE Calculate BEC$(\bar{\mathcal{D}}^* | \pi^*)$
    \STATE infoDemo $\gets$ \textbf{angSim}$(\mathbf{\hat{N}}[\text{BEC}(\mathcal{D}|\pi^*)],\mathbf{\hat{N}}[\text{BEC}(\pi^*)])$
    \STATE infoOpt $\gets$ \textbf{angSim}$(\mathbf{\hat{N}}[\text{BEC}(\mathcal{D}^*_{1:m}|\pi^*)], \mathbf{\hat{N}}[\text{BEC}(\pi^*)])$
    \RETURN \textbf{abs}(infoDemo - infoOpt)
\end{algorithmic}
\end{algorithm}

\subsubsection{Experiments}
Consider the Markov chain in Figure~\ref{fig:markovChain}. If an information-optimal demonstrator gives the single demonstration shown on the left, then BIO-IRL assumes that both the orange and black features are equally preferable over white. This is because, given different preferences, an informative teacher would have demonstrated a different trajectory. For example, if the black feature was always preferred over white and orange, a maximally informative demonstration would have demonstrated a trajectory that started in the leftmost state and moved right until it reached the rightmost state. 

Because the likelihood function for standard BIRL only measures how well the demonstration matches the optimal policy for a given reward function, BIRL assigns equal likelihoods to all reward functions that result in one of the policy candidates shown in the center of the Figure~\ref{fig:markovChain}. The bar graph in Figure~\ref{fig:markovChain} shows the likelihood of each policy candidate under BIRL and BIO-IRL with different $\lambda$ parameters. Rather than assigning equal likelihoods, BIO-IRL puts higher likelihood on Policy A, the policy that makes the demonstration appear both optimal and informative. Changing the likelihood function to reflect informative demonstrations results in a tighter posterior distribution, which is beneficial when reasoning about safety in IRL \cite{brown2018efficient}.

We also evaluated BIO-IRL in a ball sorting task shown in Figure~\ref{subfig:table}. In this task, balls start in one of 25 evenly spaced starting conditions and need to be moved into one of four bins located on the corners of the table. Demonstrations are given by selecting an initial state for the ball and moving the ball until it is in one of the bins. Actions are discretized to the four cardinal directions along the table top and the reward is a linear combination of five indicator features representing whether the ball is in one of the four bins or on the table. We generated demonstrations from 50 random rewards with $\gamma = 0.95$. This provides a wide variety of preferences over bin placement depending on the initial distance of the ball to the different bins. We used SCOT to generate informative demonstrations which were given sequentially to BIRL and BIO-IRL. Figure~\ref{fig:BIO_exp} shows that BIO-IRL is able leverage informative, non-i.i.d. demonstrations to learn more efficiently than BIRL (see appendix for details).

\begin{figure*}[t]
\centering
\subfigure[]{
\includegraphics[scale=0.1]{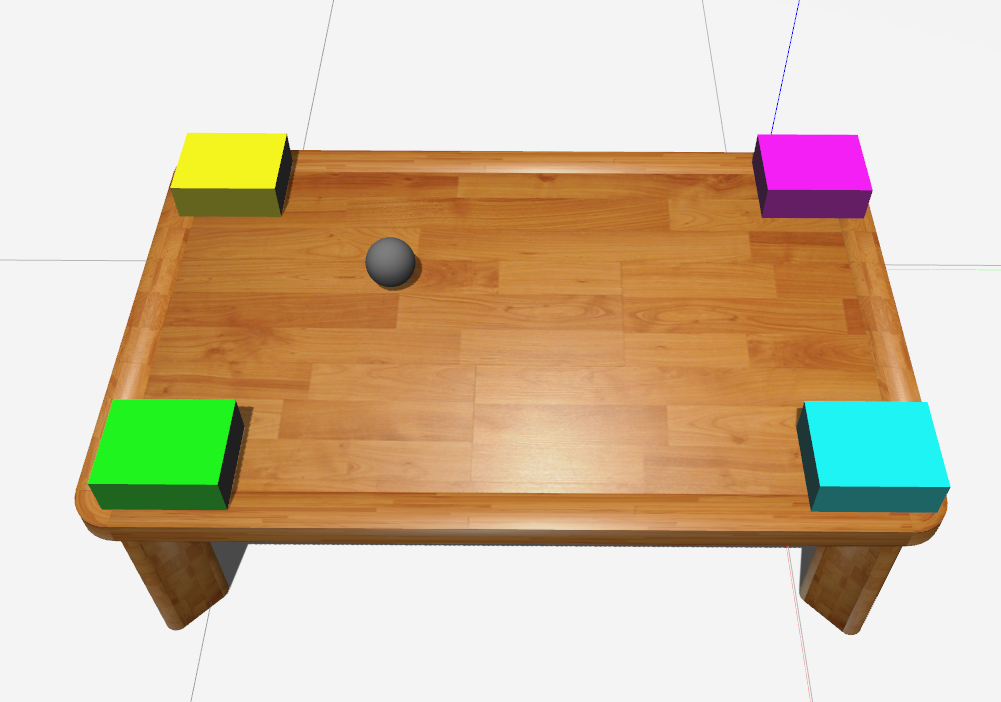}
\label{subfig:table}
}
\subfigure[]{
\includegraphics[scale=0.28]{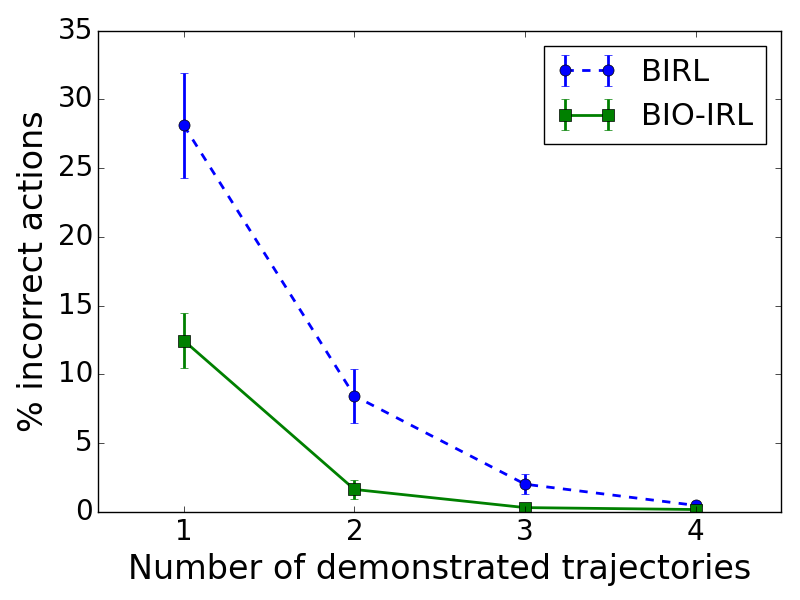}
\label{subfig:fourGoals_aloss}
}
\subfigure[]{
\includegraphics[scale=0.28]{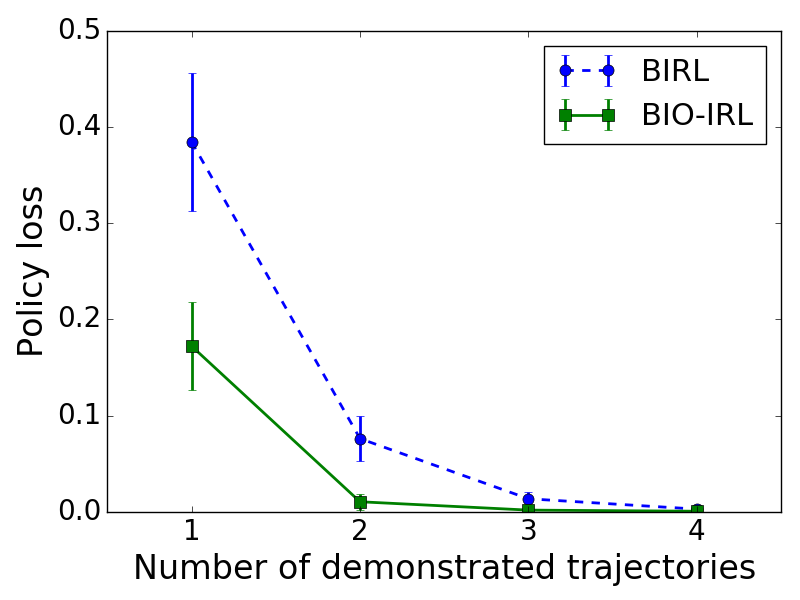}
\label{subfig:fourGoals_ploss}
}

\caption{(a) Ball sorting task. The ball starts at one of 36 positions on the table and must be moved into one of four bins depending on the demonstrator's preferences. 0-1 action losses (b) and policy value losses for MAP policies found by BIO-IRL and BIRL when receiving informative demonstrations. Error bars are 95\% confidence intervals around the mean from 50 trials.}
\label{fig:BIO_exp}
\end{figure*}


\section{Summary and Future Work}
We formalized the problem of optimally teaching an IRL agent as a machine teaching problem and proposed an efficient approximation algorithm, SCOT, to solve the machine teaching problem for IRL. Through a set-cover reduction we avoid sampling and use submodularity to achieve an efficient approximation algorithm with theoretical guarantees that the learned reward and policy are correct. Our proposed approach enables an efficient and robust algorithm for selecting maximally informative demonstrations that shows several orders of magnitude improvement in computation time over prior work and scales better to higher-dimensional problems. 


For our first application of machine teaching for IRL we examined using SCOT to approximate lower bounds on sample complexity for active IRL algorithms. Benchmarking active IRL against an approximately optimal query strategy shows that a recent risk-sensitive IRL approach \cite{brown2018risk} is approaching the machine teaching lower bound on sample complexity for grid navigation tasks.
 
Our second application of machine teaching demonstrated that an agent that knows it is receiving informative demonstrations can learn more efficiently than a standard Bayesian IRL approach. When humans teach each other we typically do not randomly pick examples to show, rather expert teachers cherry-pick pick highly informative demonstrations that highlight important details and guide the learner away from common pitfalls. However, most IRL algorithms assume that demonstrations are sampled i.i.d. from the demonstrator's policy. We proposed BIO-IRL as a way to use machine teaching to relax i.i.d. assumptions and correct for demonstration bias when learning from an informative teacher.  

One area of future work is applying SCOT to continuous state-spaces. Expected feature counts can be efficiently computed even if the model is unknown and the state-space is continuous \cite{barreto2017successor}. Additionally, behavioral equivalence classes can be approximated for continuous state spaces by sampling a representative set of starting states. Given an approximation of the BEC for continuous spaces, SCOT is still guaranteed to terminate and retain efficiency (see appendix for details). Future work also includes using SCOT to approximate the theoretical lower bound on sample complexity for more complicated active learning domains, benchmarking other active learning approaches, and using our proposed machine teaching framework to rank common LfD benchmarks according to their inherent teaching difficulty.
Future work should also investigate methods for estimating demonstrator informativeness as well as methods to detect and correct for other demonstrator biases in order to learn more efficiently from non-i.i.d. demonstrations. 
 
There are many other interesting possible applications of machine teaching to IRL. One potential application is to use machine teaching to study the robustness of IRL algorithms to poor or malicious demonstrations by studying optimal demonstration set attacks and defenses \cite{mei2015using,alfeld2017explicit}. Machine teaching for sequential decision making tasks also has applications in ad hoc teamwork \cite{stone2010ad} and explainable AI \cite{gunning2017explainable}. In ad hoc teamwork, one or more robots or agents may need to efficiently provide information about their intentions without having a reliable or agreed upon communication protocol. Agents could use our proposed machine teaching algorithm to devise maximally informative trajectories to convey intent to other agents. Similarly, in the context of explainable AI, a robot or machine may need to convey its intention or objectives to a human \cite{huang2017enabling}. Simply showing a few maximally informative examples can be a simple yet powerful way to convey intention.


\section*{ Acknowledgments}
This work has taken place in the Personal Autonomous Robotics Lab (PeARL) at The University of Texas at Austin. PeARL research is supported in part by the NSF (IIS-1724157, IIS-1638107, IIS-1617639, IIS-1749204) and ONR (N00014-18-2243).

\bibliographystyle{aaai}

\bibliography{aaai19bib}

\begin{thebibliography}{}

\bibitem[\protect\citeauthoryear{Alfeld, Zhu, and
  Barford}{2017}]{alfeld2017explicit}
Alfeld, S.; Zhu, X.; and Barford, P.
\newblock 2017.
\newblock Explicit defense actions against test-set attacks.
\newblock In {\em AAAI},  1274--1280.

\bibitem[\protect\citeauthoryear{Argall \bgroup et al\mbox.\egroup
  }{2009}]{Argall2009}
Argall, B.~D.; Chernova, S.; Veloso, M.; and Browning, B.
\newblock 2009.
\newblock {A survey of robot learning from demonstration}.
\newblock {\em Robotics and Autonomous Systems} 57(5):469--483.

\bibitem[\protect\citeauthoryear{Arora and Doshi}{2018}]{arora2018survey}
Arora, S., and Doshi, P.
\newblock 2018.
\newblock A survey of inverse reinforcement learning: Challenges, methods and
  progress.
\newblock {\em arXiv preprint arXiv:1806.06877}.

\bibitem[\protect\citeauthoryear{Babes \bgroup et al\mbox.\egroup
  }{2011}]{babes2011apprenticeship}
Babes, M.; Marivate, V.; Subramanian, K.; and Littman, M.~L.
\newblock 2011.
\newblock Apprenticeship learning about multiple intentions.
\newblock In {\em Proceedings of the 28th International Conference on Machine
  Learning (ICML-11)},  897--904.

\bibitem[\protect\citeauthoryear{Balbach and
  Zeugmann}{2009}]{balbach2009recent}
Balbach, F.~J., and Zeugmann, T.
\newblock 2009.
\newblock Recent developments in algorithmic teaching.
\newblock In {\em International Conference on Language and Automata Theory and
  Applications},  1--18.

\bibitem[\protect\citeauthoryear{Barreto \bgroup et al\mbox.\egroup
  }{2017}]{barreto2017successor}
Barreto, A.; Dabney, W.; Munos, R.; Hunt, J.~J.; Schaul, T.; van Hasselt,
  H.~P.; and Silver, D.
\newblock 2017.
\newblock Successor features for transfer in reinforcement learning.
\newblock In {\em Advances in neural information processing systems},
  4055--4065.

\bibitem[\protect\citeauthoryear{{Brown} and
  {Niekum}}{2018}]{brown2018efficient}
{Brown}, D.~S., and {Niekum}, S.
\newblock 2018.
\newblock {Efficient Probabilistic Performance Bounds for Inverse Reinforcement
  Learning}.
\newblock In {\em AAAI Conference on Artificial Intelligence}.

\bibitem[\protect\citeauthoryear{Brown, Cui, and Niekum}{2018}]{brown2018risk}
Brown, D.~S.; Cui, Y.; and Niekum, S.
\newblock 2018.
\newblock Risk-aware active inverse reinforcement learning.
\newblock In {\em Proceedings of the 2nd Annual Conference on Robot Learning
  (CoRL)}.

\bibitem[\protect\citeauthoryear{Cakmak and
  Lopes}{2012}]{cakmak2012algorithmic}
Cakmak, M., and Lopes, M.
\newblock 2012.
\newblock Algorithmic and human teaching of sequential decision tasks.
\newblock In {\em AAAI}.

\bibitem[\protect\citeauthoryear{Choi and Kim}{2011}]{choi2011map}
Choi, J., and Kim, K.-E.
\newblock 2011.
\newblock Map inference for bayesian inverse reinforcement learning.
\newblock In {\em Advances in Neural Information Processing Systems},
  1989--1997.

\bibitem[\protect\citeauthoryear{Cohn, Durfee, and
  Singh}{2011}]{cohn2011comparing}
Cohn, R.; Durfee, E.; and Singh, S.
\newblock 2011.
\newblock Comparing action-query strategies in semi-autonomous agents.
\newblock In {\em The 10th International Conference on Autonomous Agents and
  Multiagent Systems-Volume 3},  1287--1288.

\bibitem[\protect\citeauthoryear{Cui and Niekum}{2017}]{cuiactive2017}
Cui, Y., and Niekum, S.
\newblock 2017.
\newblock Active learning from critiques via bayesian inverse reinforcement
  learning.
\newblock In {\em Robotics: Science and Systems Workshop on Mathematical
  Models, Algorithms, and Human-Robot Interaction}.

\bibitem[\protect\citeauthoryear{Dayan}{1993}]{dayan1993improving}
Dayan, P.
\newblock 1993.
\newblock Improving generalization for temporal difference learning: The
  successor representation.
\newblock {\em Neural Computation} 5(4):613--624.

\bibitem[\protect\citeauthoryear{Doliwa \bgroup et al\mbox.\egroup
  }{2014}]{doliwa2014recursive}
Doliwa, T.; Fan, G.; Simon, H.~U.; and Zilles, S.
\newblock 2014.
\newblock Recursive teaching dimension, vc-dimension and sample compression.
\newblock {\em The Journal of Machine Learning Research} 15(1):3107--3131.

\bibitem[\protect\citeauthoryear{Fu, Luo, and Levine}{2017}]{fu2017learning}
Fu, J.; Luo, K.; and Levine, S.
\newblock 2017.
\newblock Learning robust rewards with adversarial inverse reinforcement
  learning.
\newblock {\em arXiv preprint arXiv:1710.11248}.

\bibitem[\protect\citeauthoryear{Gao \bgroup et al\mbox.\egroup
  }{2012}]{gao2012survey}
Gao, Y.; Peters, J.; Tsourdos, A.; Zhifei, S.; and Meng~Joo, E.
\newblock 2012.
\newblock A survey of inverse reinforcement learning techniques.
\newblock {\em International Journal of Intelligent Computing and Cybernetics}
  5(3):293--311.

\bibitem[\protect\citeauthoryear{Gao \bgroup et al\mbox.\egroup
  }{2017}]{gao2017preference}
Gao, Z.; Ries, C.; Simon, H.~U.; and Zilles, S.
\newblock 2017.
\newblock Preference-based teaching.
\newblock {\em Journal of Machine Learning Research} 18(31):1--32.

\bibitem[\protect\citeauthoryear{Goldman and
  Kearns}{1995}]{goldman1995complexity}
Goldman, S.~A., and Kearns, M.~J.
\newblock 1995.
\newblock On the complexity of teaching.
\newblock {\em Journal of Computer and System Sciences} 50(1):20--31.

\bibitem[\protect\citeauthoryear{Gunning}{2017}]{gunning2017explainable}
Gunning, D.
\newblock 2017.
\newblock Explainable artificial intelligence (xai).

\bibitem[\protect\citeauthoryear{Hadfield-Menell \bgroup et al\mbox.\egroup
  }{2016}]{hadfield2016cooperative}
Hadfield-Menell, D.; Russell, S.~J.; Abbeel, P.; and Dragan, A.
\newblock 2016.
\newblock Cooperative inverse reinforcement learning.
\newblock In {\em Advances in Neural Information Processing Systems 29}.
\newblock  3909--3917.

\bibitem[\protect\citeauthoryear{Ho \bgroup et al\mbox.\egroup
  }{2016}]{ho2016showing}
Ho, M.~K.; Littman, M.; MacGlashan, J.; Cushman, F.; and Austerweil, J.~L.
\newblock 2016.
\newblock Showing versus doing: Teaching by demonstration.
\newblock In {\em Advances In Neural Information Processing Systems},
  3027--3035.

\bibitem[\protect\citeauthoryear{Huang \bgroup et al\mbox.\egroup
  }{2017}]{huang2017enabling}
Huang, S.~H.; Held, D.; Abbeel, P.; and Dragan, A.~D.
\newblock 2017.
\newblock Enabling robots to communicate their objectives.
\newblock In {\em Robotics: Science and Systems}.

\bibitem[\protect\citeauthoryear{Jorion}{1997}]{jorion1997value}
Jorion, P.
\newblock 1997.
\newblock {\em Value at risk}.
\newblock McGraw-Hill, New York.

\bibitem[\protect\citeauthoryear{Liu and Zhu}{2016}]{liu2016teaching}
Liu, J., and Zhu, X.
\newblock 2016.
\newblock The teaching dimension of linear learners.
\newblock {\em Journal of Machine Learning Research} 17(162):1--25.

\bibitem[\protect\citeauthoryear{Lopes, Melo, and
  Montesano}{2009}]{lopes2009active}
Lopes, M.; Melo, F.; and Montesano, L.
\newblock 2009.
\newblock Active learning for reward estimation in inverse reinforcement
  learning.
\newblock In {\em Joint European Conference on Machine Learning and Knowledge
  Discovery in Databases},  31--46.
\newblock Springer.

\bibitem[\protect\citeauthoryear{Mei and Zhu}{2015}]{mei2015using}
Mei, S., and Zhu, X.
\newblock 2015.
\newblock Using machine teaching to identify optimal training-set attacks on
  machine learners.
\newblock In {\em AAAI},  2871--2877.

\bibitem[\protect\citeauthoryear{Melo, Lopes, and
  Ferreira}{2010}]{melo2010analysis}
Melo, F.~S.; Lopes, M.; and Ferreira, R.
\newblock 2010.
\newblock Analysis of inverse reinforcement learning with perturbed
  demonstrations.
\newblock In {\em ECAI},  349--354.

\bibitem[\protect\citeauthoryear{Michini \bgroup et al\mbox.\egroup
  }{2015}]{michini2015bayesian}
Michini, B.; Walsh, T.~J.; Agha-Mohammadi, A.-A.; and How, J.~P.
\newblock 2015.
\newblock Bayesian nonparametric reward learning from demonstration.
\newblock {\em IEEE Transactions on Robotics} 31(2):369--386.

\bibitem[\protect\citeauthoryear{Nemhauser, Wolsey, and
  Fisher}{1978}]{nemhauser1978analysis}
Nemhauser, G.~L.; Wolsey, L.~A.; and Fisher, M.~L.
\newblock 1978.
\newblock An analysis of approximations for maximizing submodular set
  functions—i.
\newblock {\em Mathematical Programming} 14(1):265--294.

\bibitem[\protect\citeauthoryear{Neu and
  Szepesv{\'a}ri}{2007}]{neu2007apprenticeship}
Neu, G., and Szepesv{\'a}ri, C.
\newblock 2007.
\newblock Apprenticeship learning using inverse reinforcement learing and
  gradient methods.
\newblock In {\em Proc. of 23rd Conference Annual Conference on Uncertainty in
  Artificial Intelligence},  295--302.

\bibitem[\protect\citeauthoryear{Ng and Russell}{2000}]{ng2000algorithms}
Ng, A.~Y., and Russell, S.~J.
\newblock 2000.
\newblock Algorithms for inverse reinforcement learning.
\newblock In {\em ICML},  663--670.

\bibitem[\protect\citeauthoryear{Paulraj and
  Sumathi}{2010}]{paulraj2010comparative}
Paulraj, S., and Sumathi, P.
\newblock 2010.
\newblock A comparative study of redundant constraints identification methods
  in linear programming problems.
\newblock {\em Mathematical Problems in Engineering}.

\bibitem[\protect\citeauthoryear{Pirotta and
  Restelli}{2016}]{pirotta2016inverse}
Pirotta, M., and Restelli, M.
\newblock 2016.
\newblock Inverse reinforcement learning through policy gradient minimization.
\newblock In {\em AAAI}.

\bibitem[\protect\citeauthoryear{Ramachandran and
  Amir}{2007}]{ramachandran2007bayesian}
Ramachandran, D., and Amir, E.
\newblock 2007.
\newblock Bayesian inverse reinforcement learning.
\newblock In {\em Proceedings of the 20th International Joint Conference on
  Artifical intelligence},  2586--2591.

\bibitem[\protect\citeauthoryear{Rathnasabapathy, Doshi, and
  Gmytrasiewicz}{2006}]{rathnasabapathy2006exact}
Rathnasabapathy, B.; Doshi, P.; and Gmytrasiewicz, P.
\newblock 2006.
\newblock Exact solutions of interactive pomdps using behavioral equivalence.
\newblock In {\em Proceedings of the fifth international joint conference on
  Autonomous agents and multiagent systems},  1025--1032.

\bibitem[\protect\citeauthoryear{Sadigh \bgroup et al\mbox.\egroup
  }{2016}]{sadigh2016information}
Sadigh, D.; Sastry, S.~S.; Seshia, S.~A.; and Dragan, A.
\newblock 2016.
\newblock Information gathering actions over human internal state.
\newblock In {\em IEEE/RSJ International Conference on Intelligent Robots and
  Systems (IROS)},  66--73.

\bibitem[\protect\citeauthoryear{Sadigh \bgroup et al\mbox.\egroup
  }{2017}]{sadigh2017active}
Sadigh, D.; Dragan, A.~D.; Sastry, S.~S.; and Seshia, S.~A.
\newblock 2017.
\newblock Active preference-based learning of reward functions.
\newblock In {\em Proceedings of Robotics: Science and Systems ({RSS})}.

\bibitem[\protect\citeauthoryear{Settles}{2012}]{settles2012active}
Settles, B.
\newblock 2012.
\newblock Active learning.
\newblock {\em Synthesis Lectures on Artificial Intelligence and Machine
  Learning} 6(1):1--114.

\bibitem[\protect\citeauthoryear{Shafto and Goodman}{2008}]{shafto2008teaching}
Shafto, P., and Goodman, N.
\newblock 2008.
\newblock Teaching games: Statistical sampling assumptions for learning in
  pedagogical situations.
\newblock In {\em Proceedings of the 30th annual conference of the Cognitive
  Science Society},  1632--1637.

\bibitem[\protect\citeauthoryear{Simonovits}{2003}]{simonovits2003compute}
Simonovits, M.
\newblock 2003.
\newblock How to compute the volume in high dimension?
\newblock {\em Mathematical programming} 97(1):337--374.

\bibitem[\protect\citeauthoryear{Singla \bgroup et al\mbox.\egroup
  }{2014}]{singla2014near}
Singla, A.; Bogunovic, I.; Bart{\'o}k, G.; Karbasi, A.; and Krause, A.
\newblock 2014.
\newblock Near-optimally teaching the crowd to classify.
\newblock In {\em ICML},  154--162.

\bibitem[\protect\citeauthoryear{Smith}{1984}]{smith1984efficient}
Smith, R.~L.
\newblock 1984.
\newblock Efficient monte carlo procedures for generating points uniformly
  distributed over bounded regions.
\newblock {\em Operations Research} 32(6):1296--1308.

\bibitem[\protect\citeauthoryear{Stone \bgroup et al\mbox.\egroup
  }{2010}]{stone2010ad}
Stone, P.; Kaminka, G.~A.; Kraus, S.; Rosenschein, J.~S.; et~al.
\newblock 2010.
\newblock Ad hoc autonomous agent teams: Collaboration without
  pre-coordination.
\newblock In {\em AAAI}.

\bibitem[\protect\citeauthoryear{Sutton and
  Barto}{1998}]{sutton1998reinforcement}
Sutton, R.~S., and Barto, A.~G.
\newblock 1998.
\newblock {\em Reinforcement learning: An introduction}, volume~1.
\newblock MIT press Cambridge.

\bibitem[\protect\citeauthoryear{Valiant}{1979}]{valiant1979complexity}
Valiant, L.~G.
\newblock 1979.
\newblock The complexity of computing the permanent.
\newblock {\em Theoretical computer science} 8(2):189--201.

\bibitem[\protect\citeauthoryear{Wolsey}{1982}]{wolsey1982analysis}
Wolsey, L.~A.
\newblock 1982.
\newblock An analysis of the greedy algorithm for the submodular set covering
  problem.
\newblock {\em Combinatorica} 2(4):385--393.

\bibitem[\protect\citeauthoryear{Zeng and Doshi}{2012}]{zeng2012exploiting}
Zeng, Y., and Doshi, P.
\newblock 2012.
\newblock Exploiting model equivalences for solving interactive dynamic
  influence diagrams.
\newblock {\em Journal of Artificial Intelligence Research} 43:211--255.

\bibitem[\protect\citeauthoryear{Zhang, Parkes, and
  Chen}{2009}]{zhang2009policy}
Zhang, H.; Parkes, D.~C.; and Chen, Y.
\newblock 2009.
\newblock Policy teaching through reward function learning.
\newblock In {\em Proceedings of the 10th ACM conference on Electronic
  commerce},  295--304.
\newblock ACM.

\bibitem[\protect\citeauthoryear{Zhu \bgroup et al\mbox.\egroup
  }{2018}]{zhu2018overview}
Zhu, X.; Singla, A.; Zilles, S.; and Rafferty, A.~N.
\newblock 2018.
\newblock An overview of machine teaching.
\newblock {\em arXiv preprint arXiv:1801.05927}.

\bibitem[\protect\citeauthoryear{Zhu}{2015}]{zhu2015machine}
Zhu, X.
\newblock 2015.
\newblock Machine teaching: An inverse problem to machine learning and an
  approach toward optimal education.
\newblock In {\em AAAI},  4083--4087.

\bibitem[\protect\citeauthoryear{Ziebart \bgroup et al\mbox.\egroup
  }{2008}]{ziebart2008maximum}
Ziebart, B.~D.; Maas, A.~L.; Bagnell, J.~A.; and Dey, A.~K.
\newblock 2008.
\newblock Maximum entropy inverse reinforcement learning.
\newblock In {\em AAAI},  1433--1438.

\end{thebibliography}

\appendix

\section{Behavioral equivalence classes}

\begin{customthm}{1} \label{thm:contFeasible} \cite{ng2000algorithms}
Given an MDP, BEC($\pi$) is given by the following intersection of halfspaces:
\begin{eqnarray}
\mathbf{w}^T (\mu_{\pi}^{(s,a)} - \mu_{\pi}^{(s,b)}) \geq 0, \\ \forall a \in \arg\max_{a'\in \mathcal{A}} Q^*(s,a'), b \in \mathcal{A}, s \in \mathcal{S}
\end{eqnarray}
$\mathbf{w} \in \mathbb{R}^k$ are the reward function weights, $\mu_{\pi}^{(s,a)} = \mathbb{E}[\sum_{t=0}^{\infty} \gamma^t \phi(s_t) | \pi, s_0 = s, a_0 = a]$, is the vector of expected feature counts from taking action $a$ in state $s$ and acting optimally thereafter. 
\end{customthm}
 \begin{proof}
 In every state $s$ we can assume that there is one or more optimal actions $a$. For each optimal action $a \in \arg\max_{a'\in \mathcal{A}} Q^*(s,a')$, we then have by definition that
 \begin{equation}
 Q^*(s,a) \geq Q^*(s,b), \; \forall b \in A
 \end{equation}
 Rewriting this in terms of expected discounted feature counts we have
 \begin{equation}
 w^T \mu_{\pi}^{(s,a)} \geq w^T \mu_{\pi}^{(s,b)}, \; \forall b \in A
 \end{equation}
 Thus, the behavioral equivalence class is the intersection of the following half-spaces 
 \begin{eqnarray}
 w^T (\mu_{\pi}^{(s,a)} -  \mu_{\pi}^{(s,b)}) \geq 0, \\ \forall a \in \arg\max_{a'\in \mathcal{A}} Q^*(s,a'), b \in \mathcal{A}, s \in \mathcal{S}.
 \end{eqnarray}
 \end{proof}

We can define the BEC for a set of demonstrations $\mathcal{D}$ from a policy $\pi$ similarly:

\begin{customcor}{1}\label{cor:feasibleDemo}
BEC$(\mathcal{D} | \pi)$, is given by the following intersection of halfspaces:
\begin{equation}
\mathbf{w}^T (\mu_{\pi}(s,a) - \mu_{\pi}(s,b)) \geq 0,\; \forall (s,a) \in \mathcal{D}, b \in \mathcal{A}.  
\end{equation}
\end{customcor}
 \begin{proof}
 The proof follows from the proof of Theorem~\ref{thm:contFeasible} by only considering half-spaces corresponding to optimal $(s,a)$ pairs in the demonstration.
 \end{proof}

\section{Example}

Given an MDP with finite states and actions, we can calculate BEC($\pi$) via the following result, proved by Ng and Russell \cite{ng2000algorithms}, which is equivalent to Theorem~1.

\begin{corollary}\label{thm:ngRussell} \cite{ng2000algorithms}
Given a finite state space $\mathcal{S}$ with a finite number of actions $\mathcal{A}$, policy $\pi$ is optimal if and only if reward function $R$ satisfies
\begin{equation}
(\mathbf{T_\pi} - \mathbf{T_a})(\mathbf{I} - \gamma \mathbf{T_\pi})^{-1} \mathbf{R} \geq 0, \; \forall a \in \mathcal{A}
\end{equation}
where $\mathbf{T_a}$ is the transition matrix associated with always taking action $a$, $\mathbf{T_{\pi}}$ is the transition matrix associated with policy $\pi$, and $\mathbf{R}$ is the column vector of rewards for each state $s \in \mathcal{S}$.
\end{corollary}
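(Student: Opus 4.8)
The plan is to derive the stated matrix inequality from the vectorized Bellman equation for $V^\pi$ together with the policy improvement theorem, which supplies the ``if and only if'' content. First I would vectorize the Bellman equation for $\pi$: writing $V^\pi$ for the column vector of state values, the recursion $V^\pi(s) = R(s) + \gamma \sum_{s'} T(s'\mid s,\pi(s)) V^\pi(s')$ from Equation~(1) becomes $V^\pi = \mathbf{R} + \gamma \mathbf{T_\pi} V^\pi$. Since $\mathbf{T_\pi}$ is row-stochastic and $\gamma \in [0,1)$, the matrix $\mathbf{I} - \gamma \mathbf{T_\pi}$ is invertible (equivalently, the Neumann series $\sum_k (\gamma \mathbf{T_\pi})^k$ converges because $\gamma \mathbf{T_\pi}$ has spectral radius at most $\gamma < 1$), so I can solve in closed form to get $V^\pi = (\mathbf{I} - \gamma \mathbf{T_\pi})^{-1}\mathbf{R}$.

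Next I would express the value of a one-step deviation to a fixed action $a$. Vectorizing $Q^\pi(s,a) = R(s) + \gamma \sum_{s'} T(s'\mid s,a) V^\pi(s')$ from Equation~(2) gives the column vector $\mathbf{R} + \gamma \mathbf{T_a} V^\pi$, whose $s$-th entry is $Q^\pi(s,a)$. The key characterization is the optimality criterion: $\pi$ is optimal if and only if it admits no improving one-step deviation, i.e.\ $V^\pi(s) \geq Q^\pi(s,a)$ for every $s \in \mathcal{S}$ and $a \in \mathcal{A}$. In vector form (with $\geq$ read entrywise) this reads $V^\pi \geq \mathbf{R} + \gamma \mathbf{T_a} V^\pi$ for all $a \in \mathcal{A}$.

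Finally I would combine the two. Subtracting the Bellman identity $V^\pi = \mathbf{R} + \gamma \mathbf{T_\pi} V^\pi$ from this inequality cancels $\mathbf{R}$ and leaves $\gamma(\mathbf{T_\pi} - \mathbf{T_a})V^\pi \geq 0$; dividing by the positive scalar $\gamma$ (which does not affect the sign) and substituting the closed form $V^\pi = (\mathbf{I} - \gamma \mathbf{T_\pi})^{-1}\mathbf{R}$ yields exactly $(\mathbf{T_\pi} - \mathbf{T_a})(\mathbf{I} - \gamma \mathbf{T_\pi})^{-1}\mathbf{R} \geq 0$ for all $a \in \mathcal{A}$. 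Because every manipulation in this chain is reversible, the stated biconditional is preserved throughout, matching Theorem~\ref{thm:contFeasible} under the identification of the difference term with $\mu_\pi^{(s,\pi(s))} - \mu_\pi^{(s,a)}$.

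The main obstacle is justifying the optimality criterion in the second paragraph, specifically the nontrivial direction that a policy unimprovable by one-step deviations is globally optimal. The ``only if'' direction is immediate from the Bellman optimality equations, but the ``if'' direction is the policy improvement theorem: assuming $V^\pi \geq \mathbf{R} + \gamma \mathbf{T_a} V^\pi$ for all $a$, one shows $V^\pi \geq V^{\pi'}$ for every policy $\pi'$ by repeatedly applying the Bellman operator and invoking its monotonicity, $\gamma$-contraction, and the convergence of value iteration to $V^*$. I would either cite this standard result or include a short monotone-convergence argument; it is the only ingredient beyond elementary linear algebra.
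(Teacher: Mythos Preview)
Your derivation is correct and is essentially the standard argument from \cite{ng2000algorithms}; the paper itself does not give an independent proof but simply refers the reader to that source, so your proposal is more detailed than what appears here. One small caveat: you divide by $\gamma$ assuming it is positive, whereas the paper allows $\gamma \in [0,1)$; the $\gamma = 0$ edge case is degenerate (every policy is optimal and the inequality need not hold as stated), so it is worth either assuming $\gamma > 0$ explicitly or noting that the factor can be absorbed without division.
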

\begin{proof}
See \cite{ng2000algorithms}.
\end{proof}

Consider the grid world shown in Figure~\ref{subfig:simpleNg} (see the main text) with four actions (up, down, left, right) available in each state and deterministic transitions. Actions that would leave the grid boundary (such as taking the up action from the states in the top row) result in a self-transition. We computed the BEC region defined by Theorem~\ref{thm:ngRussell}: $$(\mathbf{T_\pi} - \mathbf{T_a})(\mathbf{I} - \gamma \mathbf{T_\pi})^{-1} \mathbf{\Phi} \mathbf{w} \geq 0$$ for  $a \in \{up, down, left, right \}$, setting $\gamma = 0.9$ and using a featurized reward function $R(s) = w^T \phi(s)$, where $w = (w_0, w_1)$ is the feature weight vector with $w_0$ indicating the reward weight for a ``white" cell and $w_1$ indicating the reward weight for a ``shaded" cell. 
We can express the vector of state rewards as $\mathbf{R} = \Phi \mathbf{w}$, where $$\Phi =
[
\phi(s_0)^T,
\phi(s_1)^T,
\phi(s_2)^T,
\phi(s_3)^T,
\phi(s_4)^T,
\phi(s_5)^T]
$$
and $\phi(s_i) = (1,0)$ for $i \in \{0,2,3,4,5\}$ and $\phi(s_1) = (0,1)$, are the feature vectors for each state numbered left to right top to bottom.


The computation results in the following non-redundant constraints that fully define BEC$(\pi)$ for $\pi$ given in Figure~\ref{fig:exampleFeasiblePolicy}:
\begin{align}
2.539 w_0 - w_1\geq 0, \;\;
-w_0 \geq 0.
\end{align}
These constraints exactly describe the set of rewards that make the policy shown in Figure~\ref{subfig:simpleNg} optimal. 
 This can be seen by noting that the constraints ensure that all feature weights are non-positive, because a positive weights would cause the optimal policy to avoid early termination to accumulate as much reward as possible. We also have the constraint that if we start in state 3, it is better to move down and around the shaded state then to go directly to the terminal state, this means
 \begin{align}
 w_0 + \gamma w_0 + \gamma^2 w_0 + \gamma^3 w_0 + \gamma^4 w_0 \geq w_0 + \gamma w_1 + \gamma^2 w_0\\
 \Leftrightarrow (1 + \gamma^2 + \gamma^3) w_0 \geq w_1
 \end{align}
 which gives us the second constraint using $\gamma = 0.9$. It is straightforward to complete similar inequalities for all states to check that 
 $0\geq w_0$ and $2.539w_0 \geq w_1$ are the only non-redundant constraints.

Computing the intersection of halfspaces corresponding to the demonstration gives the following convex cone
\begin{align}
-w_1 \geq 0, \;\;
w_1 - w_2\geq 0.
\end{align}
Note that the second constraint on the difference between the two feature weights is looser than the BEC region for the entire optimal policy. This is because the demonstration only shows that both features are non-positive (making the terminal a goal) and that $w_2$ is no better than $w_1$ (otherwise the demonstration would have gone through the shaded region). The demonstration leaves open the possibility that all feature weights are equal. We also note that if the demonstration had started in the top right cell, the BEC region of the demonstration would equal the BEC region of the optimal policy. 

\section{Uncertainty Volume Minimization Algorithm}
Pseudo-code for the UVM algorithm is shown in Algorithm~\ref{alg:UVM}.

Consider the MDP shown in Figure~\ref{fig:zeroUncertaintyExample}. When the UVM algorithm is run on this task the algorithm exits the while-loop reporting that the uncertainty has gone to zero and returns the  demonstration set shown in Figure~\ref{fig:zeroUncertaintyDemos}. This is clearly not an optimal demonstration since the starting state in the upper right has never been demonstrated so the agent has no idea what it should do from that state.

\begin{figure}[]
\centering
\includegraphics[scale=0.2]{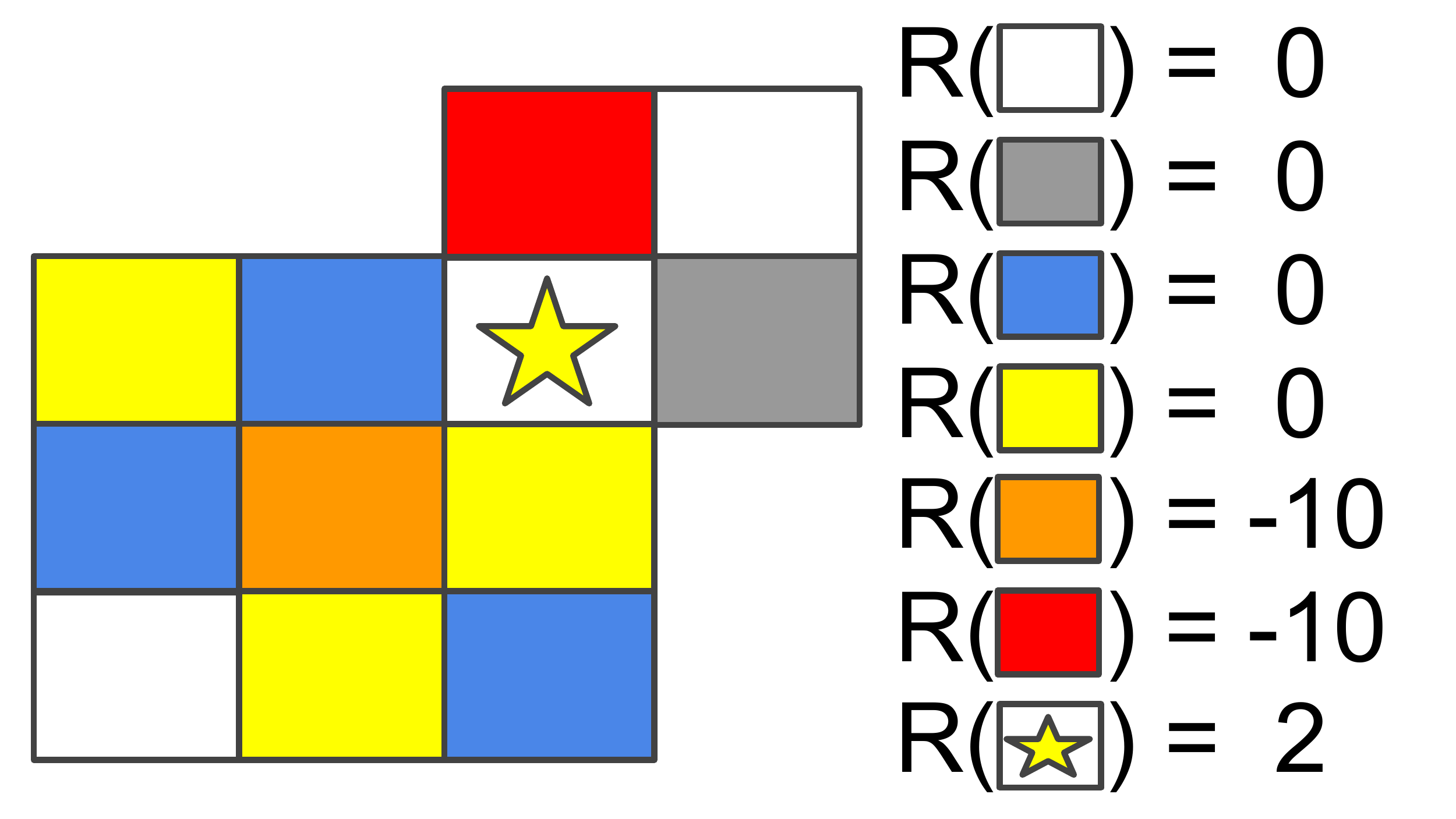}
\caption{Grid MDP with actions up, down, left, and right. The cell with a star is a terminal state. All other states are possible starting states.}
\label{fig:zeroUncertaintyExample}
\end{figure}

\begin{figure}[]
\centering
\includegraphics[scale=0.2]{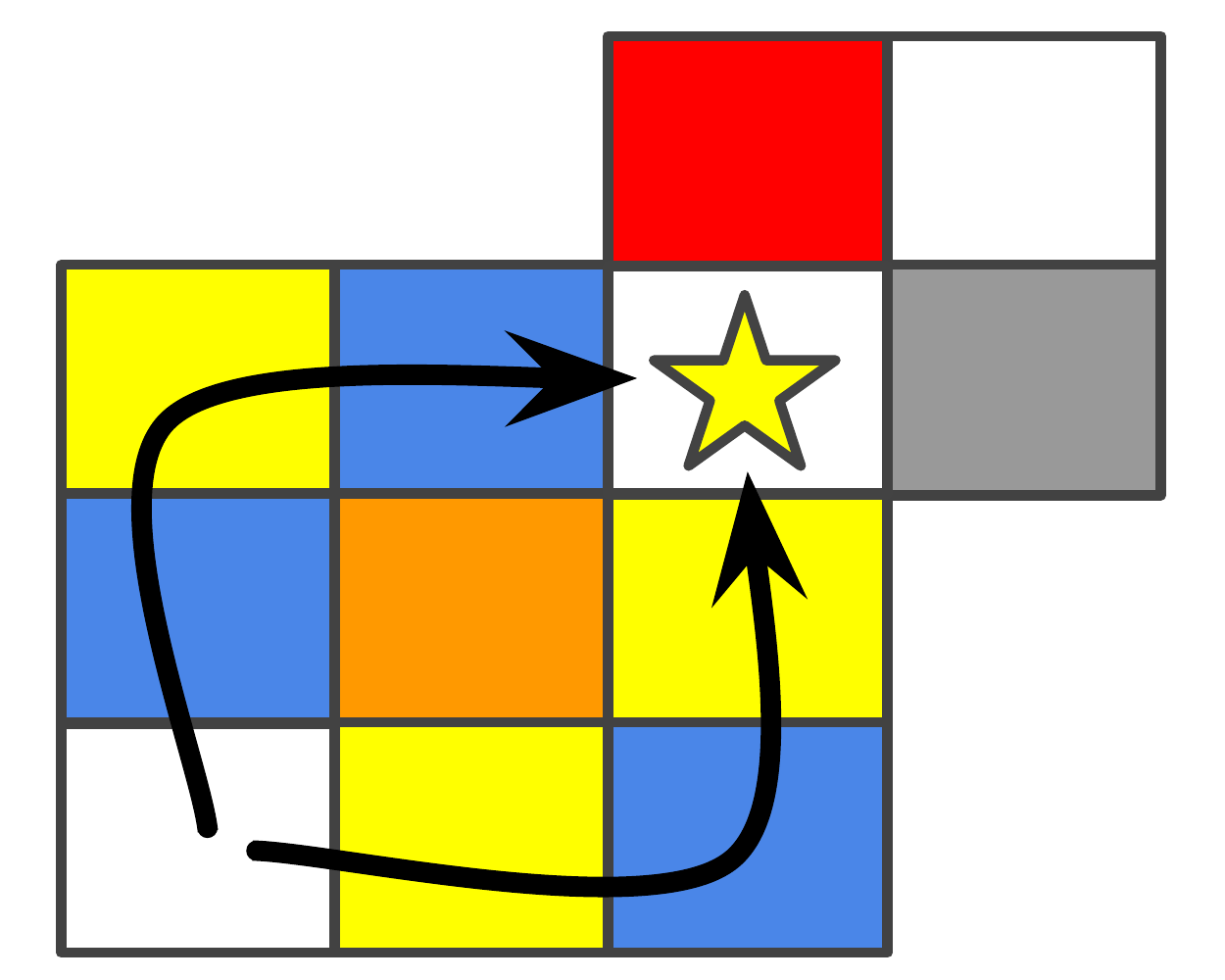}
\caption{Demonstrations found by the UVM algorithm \cite{cakmak2012algorithmic} that result in a false conclusion of zero uncertainty over the demonstrator's true reward. This false certainty comes despite not receiving any evidence about the relative rewards of the red and gray states.}
\label{fig:zeroUncertaintyDemos}
\end{figure}

This highlights a problem that the UVM algorithm has with estimating volumes. 
If there are ever two actions that are optimal in a given state $s$, and those two actions (call them $a$ and $b$) are demonstrated, then we will have the following halfspace constraints:
\begin{eqnarray}
w^T(\mu^{(s,a)}_\pi - \mu^{(s,b)}_\pi) \geq 0 \\
w^T(\mu^{(s,b)}_\pi - \mu^{(s,a)}_\pi) \geq 0
\end{eqnarray}
Thus, we have $w^T(\mu^{(s,b)}_\pi - \mu^{(s,a)}_\pi) = 0$. 

This is problematic since any strict subspace of $\mathbb{R}^k$ has measure zero, resulting in an uncertainty volume of zero. Thus, for any optimal policy where there exists a state with two or more optimal actions, the UVM algorithm will terminate with zero uncertainty if these two optimal actions are added to $\mathcal{D}$. This is true, even if this leaves an entire $(k-1)$ dimensional subspace of uncertainty over the reward. 

 \begin{algorithm} 
 \caption{Uncertainty Volume Minimization} 
 \label{alg:UVM} 
 \begin{algorithmic}[1] 
     \REQUIRE Set of possible initial states $S_0$
     \REQUIRE Feature weights $\mathbf{w}$ of the optimal reward function    
     \STATE Initialize $\mathcal{D} \gets \emptyset$
     \STATE Compute optimal policy $\pi^*$ based on $\mathbf{w}$
     \REPEAT
     		\STATE $\zeta_{\rm best} \gets \textbf{null}$
 		\FORALL{$s_0 \in S_0$}
 			\STATE Generate $K$ trajectories from $s_0$ following $\pi^*$
 			\FOR{$j \in [1,K]$}
 				\IF{$G(\mathcal{D}\cup \zeta_j) < G(\mathcal{D} \cup \zeta_{\rm best}) $ and $\zeta_j \notin \mathcal{D}$}
 				\STATE $\zeta_{\rm best} \gets \zeta_j$
 				\ENDIF
 			\ENDFOR
 		\ENDFOR
 		\STATE $\mathcal{D} \gets \mathcal{D} \cup \zeta_{\rm best}$
     \UNTIL{$\zeta_{\rm best} \textbf{  is  } \textbf{null}$}
     \RETURN Demonstration set $\mathcal{D}$
 \end{algorithmic}
 \end{algorithm}

 \section{Removing redundant constraints}
 A redundant constraint is one that can be removed without changing the BEC region.
 We can find redundant constraints efficiently using linear programming. To check if a constraint $a^Tx \leq b$ is binding we can remove that constraint and solve the linear program with $\max_x a^Tx$ as the objective. If the optimal solution is still constrained to be less than or equal to $b$ even when the constraint is removed, then the constraint can be removed. However, if the optimal value is greater than $b$ then the constraint is non-redundant. Thus, all redundant constraints can be removed by making one pass through the constraints, where each constraint is immediately removed if redundant.
 We optimize this approach by first normalizing each constraint and removing duplicates and any trivial, all zero constraints.


%

\section{Set-cover algorithm termination}
\begin{customprop}{2}
The set-cover machine teaching algorithm for IRL always terminates.
\end{customprop}
\begin{proof}
To prove that our algorithm always terminates, consider the polyhedral cone $C_f = \{x \in \mathbb{R}^n \mid A_fx \geq 0  \}$ that represents BEC$(\pi^*)$. Each demonstrated state action pair, $(s,a)$ defines a set of half spaces
\begin{equation}
w^T (\mu(s,\pi^*(s)) -  \mu(s,a)) \geq 0, \; \forall a \in \mathcal{A}.
\end{equation}

When the intersection of the halfspaces for every $(s,a) \in \mathcal{D}$ is equal to BEC($\pi^*$), the algorithm terminates an returns $\mathcal{D}$. For discrete domains, BEC$(\pi^*)$ is simply the intersection of a finite number of half-spaces from every optimal $(s,a)$ pair, so once every optimal $(s,a)$ has been demonstrated, the algorithm is guaranteed to terminate. In practice, BEC$(\pi^*)$ can be fully defined by only a subset of the possible demonstrations. Thus, our machine teaching algorithm seeks to select the minimum number of demonstrations that cover all of the rows of $A_f$. 

For continuous domains, we cannot fully enumerate every optimal $(s,a)$-pair. However, it is possible to approximate BEC$(\pi^*)$ by sampling optimal rollouts from the state space.
We then solve the constraint set-cover problem using these same sampled rollouts, so we are again guaranteed to terminate once all demonstrations are chosen, and will likely terminate after only selecting a small subset of the sampled demonstrations.
\end{proof}

\begin{proposition} \label{prop:submodular}
The set-cover machine teaching algorithm is a $(1-1/e)$-approximation to the minimum number of demonstrations needed to fully define BEC$(\pi^*)$.
\end{proposition}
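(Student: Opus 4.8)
The plan is to cast the $\epsilon = 0$ machine teaching problem as a set cover instance and then invoke the classical analysis of the greedy algorithm for monotone submodular coverage. First I would fix a finite universe. By Theorem~\ref{thm:contFeasible}, $\text{BEC}(\pi^*)$ is an intersection of finitely many half-spaces, and after the linear-programming preprocessing that discards redundant constraints we are left with a finite set $U$ of non-redundant half-space normals that minimally defines $\text{BEC}(\pi^*)$. The structural fact I would use is the one already noted before the proposition: every candidate trajectory $\tau$ from $\pi^*$ produces (via Corollary~\ref{cor:feasibleDemo}) an intersection of half-spaces $\text{BEC}(\tau \mid \pi^*)$ that \emph{contains} $\text{BEC}(\pi^*)$, so adding trajectories to $\mathcal{D}$ only shrinks $\text{BEC}(\mathcal{D}\mid\pi^*)$ toward $\text{BEC}(\pi^*)$ and can never overshoot it. Consequently $\text{BEC}(\mathcal{D}\mid\pi^*) = \text{BEC}(\pi^*)$ if and only if the half-spaces contributed by $\mathcal{D}$ include (up to normalization) every element of $U$; i.e., ``minimize $|\mathcal{D}|$ subject to $\text{BEC}(\mathcal{D}\mid\pi^*) = \text{BEC}(\pi^*)$'' is exactly set cover on universe $U$ with the subsets $\{\mathbf{\hat{N}}[\text{BEC}(\tau\mid\pi^*)] \cap U : \tau \in \mathcal{T}\}$.

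Second I would verify the monotonicity and submodularity that the greedy guarantee needs. Let $f(\mathcal{D})$ be the number of elements of $U$ covered by the union of the $\mathbf{\hat{N}}[\text{BEC}(\tau\mid\pi^*)]$ over $\tau \in \mathcal{D}$. Monotonicity is immediate because $f$ is the size of a growing union; submodularity is the textbook fact that a coverage function is submodular, since for $\mathcal{D}\subseteq\mathcal{D}'$ the set of elements that a new trajectory $\tau$ covers ``for the first time'' only shrinks as we enlarge the already-covered region. Algorithm~\ref{alg:scot} is precisely the greedy rule for this $f$: the line $\tau_{\rm greedy} = \arg\max_\tau |\mathbf{\hat{N}}[\text{BEC}(\tau\mid\pi^*)] \cap U \setminus C|$ selects the trajectory of maximum marginal coverage at each step.

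Third I would invoke the classical greedy analysis for monotone submodular coverage (Nemhauser--Wolsey--Fisher; cf.\ the submodular set-cover argument used by Singla et al.) to conclude that the greedy output is within the stated factor of the optimal cover $\mathcal{D}^*$ --- the minimum number of demonstrations that fully define $\text{BEC}(\pi^*)$ --- which is the assertion of the proposition. The $(1-1/e)$ constant is the standard guarantee from this analysis, read in the coverage-maximization form: greedy run with a cardinality budget recovers at least a $(1-1/e)$ fraction of the coverage of $U$ attainable with that many demonstrations, so once all of $U$ is coverable, greedy needs only a bounded multiple of $|\mathcal{D}^*|$ demonstrations.

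I expect the main obstacle to be the ``sufficiency'' half of the reduction in the first paragraph: rigorously arguing that re-deriving exactly the finitely many non-redundant inequalities in $U$ --- rather than all of the (potentially exponentially many) raw half-space constraints from Theorem~\ref{thm:contFeasible} --- already pins down the entire cone $\text{BEC}(\pi^*)$. This rests on two facts: a polyhedral cone equals its minimal (facet) half-space representation, and, because each $\text{BEC}(\tau\mid\pi^*) \supseteq \text{BEC}(\pi^*)$, a demonstration set can never constrain $\text{BEC}(\mathcal{D}\mid\pi^*)$ to be strictly smaller than $\text{BEC}(\pi^*)$. Together these make covering $U$ both necessary and sufficient for exact teaching, after which the submodular-greedy machinery applies verbatim.
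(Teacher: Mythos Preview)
Your proposal is correct and takes essentially the same approach as the paper: reduce to set cover on the universe of non-redundant half-space normals and invoke the classical greedy/submodularity guarantees of Nemhauser, Wolsey, and Fisher. The paper's own proof is a single sentence citing exactly those references, so your write-up is simply a more detailed unpacking of the same argument; in particular, your explicit verification that the coverage function $f$ is monotone submodular and that Algorithm~\ref{alg:scot}'s selection rule is the greedy rule for $f$ is implicit in the paper's one-line proof.
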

\begin{proof}
This result follows from the submodularity of the set cover problem \cite{wolsey1982analysis,nemhauser1978analysis}.
\end{proof}

\section{Optimality of set cover algorithm}
We now prove the condition under which our proposed algorithm is a (1-1/e)-approximation of the solution to the Machine Teaching Problem for IRL. 

Both the UVM and SCOT algorithms focus on teaching halfspaces to an IRL algorithm to define the behavioral equivalence region, BEC$(\pi^*)$. Thus, they assume that when the IRL algorithm receives state-action pair $(s,a)$ from the demonstrator, the IRL algorithm will enforce the constraint that $Q^*(s,a) \geq Q^*(s,b)$, $\forall b \in \mathcal{A}$. We call this assumption the \textit{halfspace assumption}.

\begin{definition}
The halfspace assumption is that $Q^*(s,a) \geq Q^*(s,b)$, $\forall b \in \mathcal{A}, (s,a) \in \mathcal{D}$.
\end{definition}

We now prove that, under the assumption of error-free demonstrations, three common IRL algorithms make the halfspace assumption: Bayesian IRL \cite{ramachandran2007bayesian,choi2011map}, Policy Matching \cite{neu2007apprenticeship}, and Maximum Likelihood IRL \cite{lopes2009active,babes2011apprenticeship}.

\begin{lemma} \label{lemma:halfspace_birl}
Under the assumption of error-free demonstrations, Bayesian IRL \cite{ramachandran2007bayesian,choi2011map} makes the halfspace assumption.
\end{lemma}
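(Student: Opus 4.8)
The plan is to unfold the definition of the Bayesian IRL likelihood and show that, under the error-free (exact) demonstration assumption, any reward function to which BIRL assigns nonzero posterior mass must satisfy $Q^*(s,a) \geq Q^*(s,b)$ for every $b \in \mathcal{A}$ and every $(s,a) \in \mathcal{D}$. Recall that BIRL \cite{ramachandran2007bayesian} uses a softmax (Boltzmann) likelihood of the form
\begin{equation}
P((s,a) \mid R) = \frac{e^{\alpha Q^*_R(s,a)}}{\sum_{b \in \mathcal{A}} e^{\alpha Q^*_R(s,b)}},
\end{equation}
with confidence parameter $\alpha$, and assumes demonstrated state-action pairs are conditionally independent given $R$, so $P(\mathcal{D}\mid R) = \prod_{(s,a)\in\mathcal{D}} P((s,a)\mid R)$. "Error-free demonstrations" means the teacher only ever demonstrates optimal actions, and the learner is told this — formally, the learner takes the confidence parameter $\alpha \to \infty$ (the deterministic / perfectly-rational limit), or equivalently conditions on the event that every demonstrated action is $\arg\max$-optimal.

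First I would take the $\alpha \to \infty$ limit of the softmax likelihood. In that limit $P((s,a)\mid R) \to 1/|\arg\max_{a'} Q^*_R(s,a')|$ if $a \in \arg\max_{a'} Q^*_R(s,a')$ and $P((s,a)\mid R) \to 0$ otherwise. Hence the product likelihood $P(\mathcal{D}\mid R)$ is strictly positive if and only if every $(s,a)\in\mathcal{D}$ has $a$ optimal under $R$, i.e. $Q^*_R(s,a) \geq Q^*_R(s,b)$ for all $b\in\mathcal{A}$. Next I would invoke Bayes' rule: the posterior $P(R\mid\mathcal{D}) \propto P(\mathcal{D}\mid R)\,P(R)$, so any $R$ in the support of the posterior has $P(\mathcal{D}\mid R) > 0$, and therefore satisfies the halfspace constraints. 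In particular the MAP reward returned by BIRL \cite{choi2011map} lies in $\bigcap_{(s,a)\in\mathcal{D}} \{w : Q^*_R(s,a)\geq Q^*_R(s,b)\,\forall b\}$, which by Corollary~\ref{cor:feasibleDemo} is exactly $\text{BEC}(\mathcal{D}\mid\pi^*)$. This establishes that BIRL makes the halfspace assumption in the sense of the Definition above.

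I would finish by handling the one edge case: a reward function that is constant over all states makes every action optimal in every state and hence trivially satisfies the halfspace constraints; but as noted in the discussion after Proposition~\ref{prop:1}, standard BIRL will not select such a degenerate reward when a non-degenerate one exists, so this does not affect the argument, and in the $\epsilon = 0$ regime $\mathcal{D}=\emptyset$ is ruled out anyway. The main obstacle — really a modeling subtlety rather than a technical difficulty — is pinning down precisely what "error-free demonstrations" means for a softmax-based algorithm: BIRL with finite $\alpha$ assigns positive likelihood to suboptimal actions, so the halfspace assumption only holds in the infinite-confidence limit (or, equivalently, when one restricts the likelihood's support to optimal actions). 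Once that interpretation is fixed, the proof is a short computation with the softmax and Bayes' rule; the remaining lemmas for Policy Matching and Maximum Likelihood IRL will follow the same template, differing only in which likelihood function one takes the deterministic limit of.
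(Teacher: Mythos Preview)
Your proposal is correct and follows essentially the same approach as the paper: write down the BIRL softmax likelihood, take the $\alpha \to \infty$ limit, and observe that the likelihood is positive only when the demonstrated actions are $Q^*$-optimal, which is exactly the halfspace assumption. The paper's proof is terser (it omits your explicit $1/|\arg\max|$ limit value, the Bayes'-rule step to the posterior/MAP, and the degenerate-reward edge case), but the argument is the same.
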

\begin{proof}
Bayesian IRL uses likelihood 
\begin{equation} \label{eqn:birl_likelihood}
P_{\rm opt}(\mathcal{D} | R) = \prod_{(s,a) \in \mathcal{D}}\frac{e^{\alpha Q^*(s,a)}}{ \sum_{b \in A} e^{\alpha Q^*(s,b)}}
\end{equation}
where $Q^*$ is the optimal $Q$-function under reward function $R$ and $\alpha\in [0,\infty)$ represents the confidence that the demonstrations come from $\pi^*$. As $\alpha \rightarrow \infty$, Bayesian IRL assume error-free demonstrations and we have 
\begin{equation} \label{eqn:birl_likelihood}
\lim_{\alpha \rightarrow \infty} P_{\rm opt}(\mathcal{D} | R) = 0 \iff  \exists b \in \mathcal{A}, \text{ s.t. } Q^*(s,a) < Q^*(s,b) .
\end{equation}
Thus, Bayesian IRL only gives positive likelihood to reward functions $R$, if $Q^*(s,a) \geq Q^*(s,b)$ $\forall b \in \mathcal{A}, (s,a) \in \mathcal{D}$.
\end{proof}

\begin{corollary} \label{cor:halfspace_policymatching}
Under the assumption of error-free demonstrations, Policy Matching \cite{neu2007apprenticeship} makes the optimal teaching assumption.
\end{corollary}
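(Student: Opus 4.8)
The plan is to mirror the argument of Lemma~\ref{lemma:halfspace_birl} (the Bayesian IRL case), which is presumably why the statement is phrased as a corollary. Policy Matching \cite{neu2007apprenticeship} searches for a reward $R$ that minimizes a loss measuring the discrepancy between the demonstrated behavior and the behavior induced by $R$; concretely it minimizes a nonnegative loss of the form $\mathcal{L}(R) = \sum_{s} \nu(s)\, d\big(\pi_{\mathcal{D}}(\cdot\mid s),\,\pi_R(\cdot\mid s)\big)$, where $\pi_{\mathcal{D}}$ is the empirical policy induced by the demonstrations, $\pi_R$ is the Boltzmann policy $\pi_R(a\mid s) \propto e^{\eta Q^*_R(s,a)}$ induced by the optimal $Q$-function of $R$, $\nu$ is a state weighting that is positive on every demonstrated state, and $d(\cdot,\cdot)$ vanishes iff its two arguments are equal. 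First I would note that under error-free demonstrations the empirical policy places all of its mass on demonstrated actions, so $\pi_{\mathcal{D}}(a\mid s)=1$ for every $(s,a)\in\mathcal{D}$ (more generally, the support of $\pi_{\mathcal{D}}(\cdot\mid s)$ is contained in the demonstrator's set of optimal actions at $s$).

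Next I would analyze the infimum of $\mathcal{L}$ over reward functions in the error-free regime, which — exactly as in the BIRL proof where $\alpha\to\infty$ — corresponds to driving the inverse temperature $\eta\to\infty$. In that limit $\pi_R(\cdot\mid s)$ converges to the greedy policy with respect to $Q^*_R$, so $\mathcal{L}(R)$ approaches its infimum only if, for every demonstrated state $s$, the greedy policy of $R$ agrees with $\pi_{\mathcal{D}}(\cdot\mid s)$; in particular $a\in\arg\max_{b\in\mathcal{A}} Q^*_R(s,b)$, i.e. $Q^*_R(s,a)\ge Q^*_R(s,b)$ for all $b\in\mathcal{A}$. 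Conversely, any reward $R$ that violates the half-space constraint for some $(s,a)\in\mathcal{D}$ has $\pi_R(a\mid s)$ bounded away from $1$, hence (since $\nu(s)>0$ on demonstrated states) $\mathcal{L}(R)$ bounded away from the infimum, so such an $R$ is never returned by Policy Matching. Rewriting the resulting inequalities in terms of expected discounted feature counts, as in the proof of Theorem~\ref{thm:contFeasible}, yields $w^T(\mu^{(s,a)}_{\pi} - \mu^{(s,b)}_{\pi})\ge 0$ for all $b\in\mathcal{A}$, which is precisely the half-space assumption (the ``optimal teaching assumption'' of the statement) and matches Corollary~\ref{cor:feasibleDemo}.

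The main obstacle I anticipate is making this limiting argument precise for the specific objective used in \cite{neu2007apprenticeship}: one must (i) pin down the exact loss and policy parameterization used there, (ii) justify that ``error-free demonstrations'' really does correspond to the deterministic/zero-temperature limit, analogously to $\alpha\to\infty$ in Lemma~\ref{lemma:halfspace_birl}, and (iii) treat ties — states with multiple optimal demonstrator actions, where $\pi_{\mathcal{D}}(\cdot\mid s)$ may itself be stochastic — by checking that the matching condition reduces to membership $a\in\arg\max_b Q^*_R(s,b)$, which still delivers the required half-space inequality and nothing stronger. A secondary point worth verifying is that a discrepancy at a demonstrated state cannot be masked by a vanishing weight $\nu(s)$; this holds because demonstrated states are, by construction, visited by the demonstrations themselves.
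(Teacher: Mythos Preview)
Your approach is sound but takes a genuinely different route from the paper. The paper's proof is a one-liner: it cites Melo et al.\ \cite{melo2010analysis}, who showed that Bayesian IRL and Policy Matching share the same reward solution space, and then simply invokes Lemma~\ref{lemma:halfspace_birl}. You instead work from first principles, unpacking the Policy Matching objective (Boltzmann policy parameterization, state-weighted discrepancy loss) and arguing directly that in the error-free/zero-temperature limit any minimizer must satisfy the half-space constraints at demonstrated pairs. Your route is more self-contained and makes explicit \emph{why} the equivalence holds, at the cost of the extra verification steps you yourself flag (pinning down the exact loss in \cite{neu2007apprenticeship}, justifying the $\eta\to\infty$ limit, handling ties, and ensuring demonstrated states carry positive weight). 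The paper's route is much shorter but offloads all of that work to the cited equivalence result; if you have access to \cite{melo2010analysis}, that shortcut is what the authors intended and would let you drop your entire limiting argument.
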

\begin{proof}
Melo et al. \cite{melo2010analysis} proved that Bayesian IRL \cite{ramachandran2007bayesian} and Policy Matching \cite{neu2007apprenticeship} share the same reward solution space. Thus, the lemma follows from the previous proof.
\end{proof}

\begin{corollary}\label{cor:halfspace_mlirl}
Under the assumption of error-free demonstrations, Maximum Likelihood IRL \cite{lopes2009active,babes2011apprenticeship} makes the optimal teaching assumption.
\end{corollary}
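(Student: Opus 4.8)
The plan is to mirror the proof of Lemma~\ref{lemma:halfspace_birl}, exploiting the fact that Maximum Likelihood IRL \cite{lopes2009active,babes2011apprenticeship} uses essentially the same Boltzmann likelihood as Bayesian IRL, but selects a reward by \emph{maximizing} that likelihood rather than integrating against it. Concretely, ML-IRL models the probability of a demonstrated pair $(s,a)$ as $P((s,a)\mid R) = e^{\beta Q^*(s,a)} / \sum_{b \in \mathcal{A}} e^{\beta Q^*(s,b)}$, where $Q^*$ is the optimal $Q$-function under $R$ and $\beta \in [0,\infty)$ is the confidence (inverse-temperature) parameter, and returns $\hat R \in \arg\max_R \prod_{(s,a)\in\mathcal{D}} P((s,a)\mid R)$. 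The assumption of error-free demonstrations corresponds to the regime $\beta \to \infty$.

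First I would observe that the per-term log-likelihood $\log P((s,a)\mid R) = \beta Q^*(s,a) - \log\sum_{b} e^{\beta Q^*(s,b)}$ is always $\le 0$, and that, as $\beta \to \infty$, it tends to $0$ if $a \in \arg\max_{b} Q^*(s,b)$ and to $-\infty$ otherwise; hence the total log-likelihood tends to $0$ exactly for rewards under which every $(s,a)\in\mathcal{D}$ is an optimal action, and to $-\infty$ for all other rewards. Next I would note that the set of rewards under which every $(s,a)\in\mathcal{D}$ is optimal is nonempty: since the demonstrations are error-free they are generated by $\pi^*$, which is optimal under $\mathbf{w^*}$, so $\mathbf{w^*}$ itself lies in this set. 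Therefore, for $\beta$ large enough (and in the error-free limit), the maximizer $\hat R$ must lie in this consistent set, i.e., $Q^*(s,a) \ge Q^*(s,b)$ for all $b \in \mathcal{A}$ and all $(s,a)\in\mathcal{D}$, which is precisely the halfspace assumption.

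The only real subtlety — and the step I would be most careful about — is interchanging the $\arg\max$ with the limit $\beta \to \infty$: one must argue that because inconsistent rewards have likelihood vanishing to $0$ while consistent rewards have likelihood approaching its supremum, the maximizing reward is eventually always consistent, not merely that consistency holds asymptotically for each fixed reward. This can be handled by the elementary bound $\prod_{(s,a)\in\mathcal{D}} P((s,a)\mid R) \le 1$, with the bound attained in the limit only on the consistent set, so any $\hat R$ achieving the supremum must be consistent. An even cleaner route is to appeal to the equivalence of solution spaces: since ML-IRL and Bayesian IRL share the same Boltzmann likelihood, the reasoning of Lemma~\ref{lemma:halfspace_birl} — that positive (here, maximal) limiting likelihood forces $Q^*(s,a)\ge Q^*(s,b)$ for all $b$ and all $(s,a)\in\mathcal{D}$ — applies verbatim to the reward returned by ML-IRL.
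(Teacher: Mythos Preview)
Your proposal is correct, and the ``even cleaner route'' you describe at the end is exactly the paper's proof: since Maximum Likelihood IRL uses the same likelihood function as Bayesian IRL, the result follows immediately from Lemma~\ref{lemma:halfspace_birl}. The more detailed limit argument you give (log-likelihood tending to $0$ or $-\infty$, nonemptiness of the consistent set, the $\arg\max$/limit interchange) is sound but goes well beyond the paper's one-line justification.
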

\begin{proof}
Maximum Likelihood IRL uses the same likelihood function as Bayesian IRL, thus the result follows from the previous lemma.
\end{proof}

We can now prove the following Theorem:

\begin{customthm}{2}
Under the assumption of error-free demonstrations, SCOT is a $(1-1/e)$-approximation to the Machine Teaching Problem for IRL (Section~\ref{subsec:problemdef}) for the following learning algorithms:
\begin{itemize}
\item Bayesian Inverse Reinforcement Learning \cite{ramachandran2007bayesian,choi2011map}
\item Policy Matching \cite{neu2007apprenticeship}
\item Maximum Likelihood Inverse Reinforcement Learning \cite{lopes2009active,babes2011apprenticeship}
\end{itemize}
\end{customthm}
\begin{proof}
Given an IRL algorithm that makes the halfspace assumption, by Proposition~\ref{prop:submodular} SCOT will find a set of demonstrations that are a (1-1/e)-approximation of the maximally informative demonstration set. Thus, by Lemma~\ref{lemma:halfspace_birl}, in the limit as $\alpha \rightarrow \infty$, the SCOT machine teaching algorithm is a (1-1/e)-approximation to the optimal demonstration set for Bayesian IRL. Similarly, by corollaries~\ref{cor:halfspace_policymatching} and \ref{cor:halfspace_mlirl}, SCOT is a (1-1/e)-approximation to the optimal demonstration set for Policy Matching and Maximum Likelihood IRL.
\end{proof}

\begin{table*}[!th]
   \caption{Comparison of different optimal teaching algorithms across random 9x9 grid worlds with 8-dimensional binary features. Algorithms compared are Uncertainty Volume Minimization (UVM), Set Cover Optimal Teaching (SCOT) with and without redundancies, and random sampling from the optimal policy. UVM($x$) was run using $x$ Monte Carlo samples for volume estimation. Results show the average number of state-action pairs in the demonstration set $\mathcal{D}$, the average number of suboptimal actions when performing IRL using $\mathcal{D}$ learned policy compared to optimal, and the average run time of the optimal teaching algorithm in seconds. All results are averaged over 20 replicates.}
   \label{tab:uvm_vs_setcover}
 \centering
  \begin{tabular}{lcccccc}
    \toprule
        & Ave. $(s,a)$ pairs & Ave. policy loss & Ave. \% incorrect actions & Ave. time (s) \\ 
    \midrule
    \midrule
UVM ($10^4$) & 3.850 & 1.722 & 44.074 & 247.944 \\
UVM ($10^5$) & 5.150 & 1.539 & 31.420 & 567.961 \\
UVM ($10^6$) & 6.650 & 1.076 & 19.568 & 1620.578 \\
UVM ($10^7$) & 8.450 & 0.555 & 18.642 & 10291.365 \\
SCOT (redundant) & 66.740 & 0.001 & 0.617 & 12.407 \\
SCOT & 17.160 & 0.001 & 0.667 & 0.965 \\
Random & 17.700 & 0.015 & 10.123 & 0.000 \\

    \bottomrule
  \end{tabular}

  \label{tab:randomGridComparison}
\end{table*}

\section{Algorithm comparison full results}
We compared the SCOT algorithm with the UVM algorithm of Cakmak and Lopes \cite{cakmak2012algorithmic}. We also report here a comparison against SCOT without removing redundancies and against random selected demonstrations from the optimal policy. 

We ran an experiment on random 9x9 grid worlds with 8 binary indicator features per cell with one feature active per cell and $\gamma = 0.95$. For this experiment the demonstrations were single state-action pairs. We measured the 0-1 policy loss \cite{michini2015bayesian} for each demonstration set by computing the percentage of states where the resulting policy took a suboptimal action under the true reward. The policy was found by first finding the maximum likelihood reward function \cite{lopes2009active,babes2011apprenticeship}, by using BIRL \cite{ramachandran2007bayesian} with a uniform prior and $\alpha=100$. We ran the MCMC chain for 10,000 steps using $\alpha = 100$ and step size of $0.005$.  Given the maximum likelihood reward function, the corresponding policy was then found using value iteration. The results are shown in Table~\ref{tab:randomGridComparison}. 

We found that the UVM algorithm usually underestimates the size of the optimal teaching set of demonstrations, due to the difficulty of estimating volumes as discussed earlier, resulting in high 0-1 loss. We tried sampling more points, but found that this only slightly improved 0-1 loss while significantly increasing run-time. Compared to UVM, our results show that SCOT is more accurate and more efficient---it can successfully find good demonstrations that lead to IRL learning the correct reward and corresponding policy, with orders of magnitude less computation time. We also found that removing redundant halfspaces is important; keeping redundant constraints results in slightly lower average performance loss, due to the randomness in MCMC, but finds demonstration sets that are much larger. SCOT removes redundant half-space constraints which results in solutions with fewer state-action pairs and a faster run-time, since the set-cover problem is substantially reduced in size.


We also compared against randomly sampling 20 state-action pairs from the optimal policy. The results show that SCOT is able to find informative demonstrations that significantly reduce the number of $(s,a)$ pairs to teach a policy, compared to sampling i.i.d. from the policy.

To further explore the sensitivity of UVM to the number of features, we ran a test on a fixed 6x6 size grid world with varying numbers of features. We wanted to see if using UVM and SCOT to teach a deterministic policy would avoid the early stopping problem for UVM by avoiding the problem of having multiple optimal actions from the same starting position. We also investigated using longer demonstrations with a horizon of 6. We found that the SCOT algorithm picks more demonstrations as the number of features increases, however, the UVM algorithm cannot reliably estimate volumes for higher-dimensional spaces. The results are shown in Figure~\ref{fig:uvm_vs_scot}. The number of state-action pairs in the demonstrations are shown to plateau and even slightly decrease for UVM. Thus, UVM underestimates the number of required demonstrations to teach an optimal policy for high-dimensional features while still requiring nearly than three orders of magnitude more computation time.

\begin{figure}[!th]
\centering
\subfigure[]{
\includegraphics[scale=0.3]{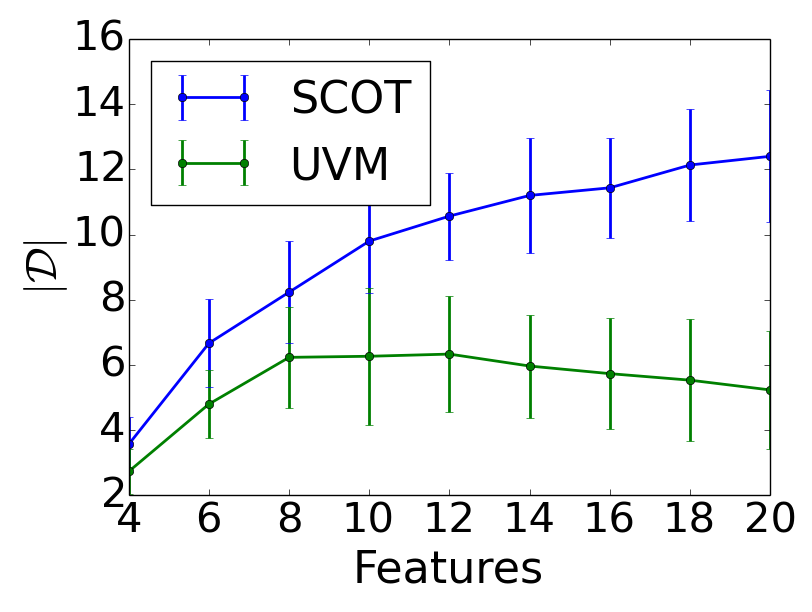}
\label{subfig:sensitivityFeatures}
}
\subfigure[]{
\includegraphics[scale=0.3]{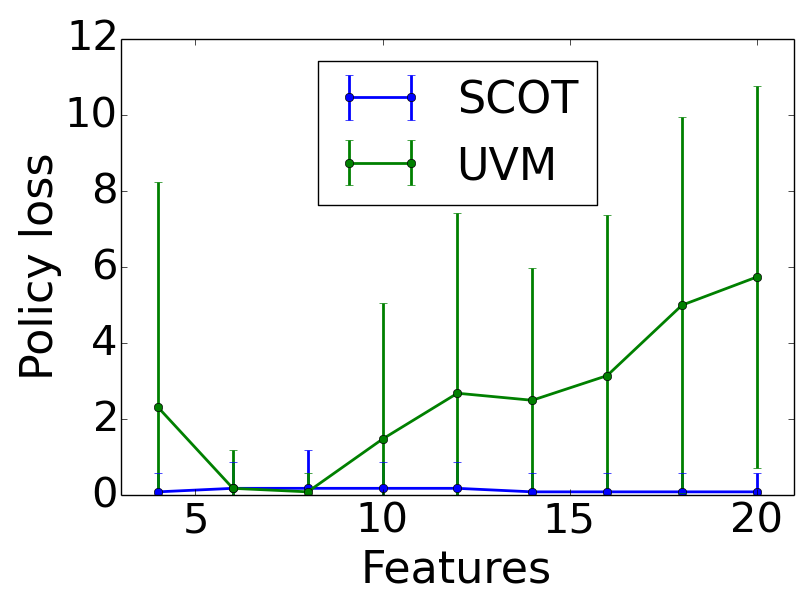}
\label{subfig:sensitivityFeatures}
}
\caption{SCOT algorithm is robust to increasing numbers of features, whereas UVM is not robust since it relies on good volume estimates. Results averaged over 30 runs on a 6x6 grid world. UVM uses 1,000,000 samples for volume estimation.}
\label{fig:uvm_vs_scot}
\end{figure}

\section{Active learning experiment parameters}
We generated 10x10 random grid worlds where each state was assigned a random one-hot 10-dimensional feature vector we set the discount factor to $\gamma = 0.95$. We ran BIRL with a uniform prior to obtain the MAP reward function given demonstrations for each active IRL algorithm. Each active query resulted in an optimal trajectory of length 20 demonstrated from the optimal policy. We ran the MCMC chain for 10,000 steps using $\alpha = 100$ and step size of $0.005$.

\section{BIO-IRL algorithm specifics}
We calculate angular similarity as follows: We take all the normal vectors from BEC($\mathcal{D}|\pi^*$) and do a greedy matching to BEC($\pi^*$). Once we match the first vector in BEC($\mathcal{D}|\pi^*$) with the closest vector in BEC($R$), we remove the best match from BEC($\pi^*$) and continue with the next vector in BEC($\mathcal{D}|\pi^*$).  When there are no remaining half-spaces in BEC($\mathcal{D}|\pi^*$) The algorithm returns the cumulative sum of half-space similarities divided by the number of half-spaces in BEC($\pi^*$).

Because our normal vectors can have positive and negative elements, we define the similarity between two vectors $\mathbf{x}$ and $\mathbf{y}$ as 
\begin{equation}
similarity(\mathbf{x}, \mathbf{y}) = 1 - \cos^{-1}(\mathbf{x} \cdot \mathbf{y}) / \pi.
\end{equation} 

\subsection{Markov chain BIO-IRL experiment}
We used $\alpha = 100$ as the softmax temperature parameter for BIRL and BIO-IRL.

\subsection{Ball sorting BIO-IRL experiment}
We discretized the table top into a 6x6 grid of positions, all of which are potential starting states. The four discrete actions move the ball along the table top in the four cardinal directions. The demonstrations consisted of optimal trajectories found using Value Iteration of length 10. BIRL and BIO-IRL both used the following parameters: $\alpha=100$, MCMC chain length=1000, MCMC step size = 0.05. BIO-IRL used $\lambda = 100$.

\end{document}